\def\Snospace~{\S{}}
\mathchardef\mhyphen="2D
\newcommand{\sys}{Alpa\xspace}
\newcommand{\aref}[2]{\hyperref[#1]{\ref*{#1}{#2}}}
\newtheorem{theorem}{Theorem}
\newcommand{\showcomments}{yes}
\newcommand\todo[1]{\ifthenelse{\equal{\showcomments}{yes}}{{\color{red} TODO: #1}}{\ignorespaces}}
\newcommand\joey[1]{\ifthenelse{\equal{\showcomments}{yes}}{{\color{blue} (Joey: #1)}}{\ignorespaces}}
\newcommand\lianmin[1]{\ifthenelse{\equal{\showcomments}{yes}}{{\color{blue} Lianmin: #1}}{\ignorespaces}}
\newcommand\Danyang[1]{\ifthenelse{\equal{\showcomments}{yes}}{{\color{red} [Danyang: #1]}}{\ignorespaces}}
\newcommand\ion[1]{\ifthenelse{\equal{\showcomments}{yes}}{{\color{blue} Ion: #1}}{\ignorespaces}}
\newcommand\yida[1]{\ifthenelse{\equal{\showcomments}{yes}}{{\color{blue} [Yida: #1]}}{\ignorespaces}}
\newcommand\cody[1]{\ifthenelse{\equal{\showcomments}{yes}}{{\color{blue} [Cody: #1]}}{\ignorespaces}}
\newcommand\hao[1]{\ifthenelse{\equal{\showcomments}{yes}}{{\color{blue} Hao: #1}}{\ignorespaces}}
\newcommand\zhuohan[1]{\ifthenelse{\equal{\showcomments}{yes}}{{\color{blue} Zhuohan: #1}}{\ignorespaces}}
\begin{document}
\date{}

\title{\Large \bf \sys: Automating Inter- and Intra-Operator Parallelism \\ for Distributed Deep Learning}

\author{
\rm{Lianmin Zheng$^{\text{1}, *}$ \enskip
    Zhuohan Li$^{\text{1}, *}$ \enskip
    Hao Zhang$^{\text{1}, *}$ \enskip
    Yonghao Zhuang$^{\text{4}}$ \enskip}\\
\rm{Zhifeng Chen$^{\text{3}}$ \enskip
    Yanping Huang$^{\text{3}}$ \enskip
    Yida Wang$^{\text{2}}$ \enskip
    Yuanzhong Xu$^{\text{3}}$ \enskip
    Danyang Zhuo$^{\text{6}}$ \enskip}\\
   \rm{Eric P. Xing$^{\text{5}}$ \enskip
Joseph E. Gonzalez$^{\text{1}}$ \enskip
    Ion Stoica$^{\text{1}}$ \enskip}\\
\\
  {$^{\text{1}}$UC Berkeley\enskip $^{\text{2}}$Amazon Web Services\enskip $^{\text{3}}$Google\enskip  $^{\text{4}}$Shanghai Jiao Tong University} \\
  {$^{\text{5}}$MBZUAI, Carnegie Mellon University\enskip $^{\text{6}}$Duke University}
}
\maketitle 

\begin{abstract}
\sys automates model-parallel training of large deep learning (DL) models by generating execution plans that unify data, operator, and pipeline parallelism.
Existing model-parallel training systems either require users to manually create a parallelization plan or automatically generate one from a limited space of model parallelism configurations. They do not suffice to scale out complex DL models on distributed compute devices.
\sys distributes the training of large DL models by viewing parallelisms as two hierarchical levels: inter-operator and intra-operator parallelisms.
Based on it, \sys constructs a new hierarchical space for massive model-parallel execution plans.
\sys designs a number of compilation passes to automatically derive efficient parallel execution plans at each parallelism level. \sys implements an efficient runtime to orchestrate the two-level parallel execution on distributed compute devices.
Our evaluation shows \sys generates parallelization plans that match or outperform hand-tuned model-parallel training systems even on models they are designed for.
Unlike specialized systems, \sys also generalizes to models with heterogeneous architectures and models without manually-designed plans. \sys's source code is publicly available at \url{https://github.com/alpa-projects/alpa}.
\end{abstract}

{\let\thefootnote\relax\footnote{{$^*$Lianmin, Zhuohan, and Hao contributed equally. Part of the work was done when Lianmin interned at Amazon and Zhuohan interned at Google.}}}
\section{Introduction}
\label{sec:intro}

Several of the recent advances~\cite{huang2019gpipe, brown2020language,shoeybi2019megatron} in deep learning (DL) have been a direct result of significant increases in model size. 
For example, scaling language models, such as GPT-3, to hundreds of billions of parameters~\cite{brown2020language} and training on much larger datasets enabled fundamentally new capabilities. 

However, training these extremely large models on distributed clusters currently requires a significant amount of engineering effort that is specific to both the model definition and the cluster environment.  
For example, training a large transformer-based language model requires heavy tuning and careful selection of multiple parallelism dimensions~\cite{narayanan2021efficient}. Training the large Mixture-of-Expert (MoE) transformers model~\cite{lepikhin2020gshard, du2021glam} on TPU clusters requires manually tuning the partitioning axis for each layer, whereas training the same model on an AWS GPU cluster calls for new pipeline schemes that can depend on the choices of partitioning (\S\ref{subsec:e2e}).

More generally, efficient large-scale model training requires tuning a complex combination of data, operator, and pipeline parallelization approaches at the granularity of the individual tensor operators.
Correctly tuning the parallelization strategy has been shown~\cite{li2021terapipe,lee2019automating} to deliver an order of magnitude improvements in training performance, but depends on strong machine learning (ML) and system expertise.

Automating the parallelization of large-scale models would significantly accelerate ML research and production by enabling model developers to quickly explore new model designs without regard for the underlying system challenges.
Unfortunately, it requires navigating a complex space of plans that grows exponentially with the dimensions of parallelism and the size of the model and cluster. 
For example, when all parallelism techniques are enabled, figuring out the execution plan involves answering a web of interdependent questions, such as how many data-parallel replicas to create, which axis to partition each operator along, how to split the model into pipeline stages, and how to map devices to the resulting parallel executables. 
The interplay of different parallelization methods and their strong dependence on model and cluster setups form a combinatorial space of plans to optimize.
Recent efforts~\cite{narayanan2019pipedream,fan2021dapple,wang2019supporting} to automatically parallelize model training are constrained to the space of a single model-parallelism approach, or rely on strong assumptions on the model and cluster specifications (\S\ref{sec:conventional}). 

Our key observation is that we can organize different parallelization techniques into a \emph{hierarchical space} and map these parallelization techniques to the \emph{hierarchical structure} of the compute cluster.
Different parallelization techniques have different bandwidth requirements for communication, while a typical compute cluster has a corresponding structure: closely located devices can communicate with high bandwidth while distant devices have limited communication bandwidth.

With this observation in mind, in this paper, we take a different view from conventional data and model parallelisms, and re-categorize ML parallelization approaches as \emph{intra-operator} and \emph{inter-operator} parallelisms. 
Intra-operator parallelism partitions ML operators along one or more tensor axes (batch or non-batch) and dispatches the partitions to distributed devices (Fig.~\aref{fig:all-plans}{c}); inter-operator parallelism, on the other hand, slices the model into disjoint stages and pipelines the execution of stages on different sets of devices (Fig.~\aref{fig:all-plans}{d}). They take place at two different granularities of the model computation, differentiated by whether to partition operators.

Given that, a parallel execution plan can be expressed \emph{hierarchically} by specifying the plan in each parallelism category, leading to a number of advantages. First, intra- and inter-operator parallelisms feature distinct characteristics: intra-operator parallelism has better device utilization, but results in communicating at every split and merge of partitioned operators, per training iteration; whereas inter-operator parallelism only communicates between adjacent stages, which can be light if sliced properly, but incurs device idle time due to scheduling constraints. We can harness the asymmetric nature of communication bandwidth in a compute cluster, and map intra-operator parallelism to devices connected with high communication bandwidth, while orchestrating the inter-operator parallelism between distant devices with relatively lower bandwidth in between.
Second, this hierarchical design allows us to solve each level near-optimally as an individual tractable sub-problem.
While the joint execution plan is not guaranteed globally optimal, they demonstrate strong performance empirically for training various large models.

Guided by this new problem formulation, we design and implement \sys, the first compiler that automatically generates parallel execution plans covering all data, operator, and pipeline parallelisms. Given the model description and a cluster configuration, \sys achieves this by partitioning the cluster into a number of \emph{device meshes}, each of which contains devices with preferably high-bandwidth connections,
and partitioning the computation graph of the model into \emph{stages}. It assigns stages to device meshes, and automatically orchestrates intra-operator parallelisms on a device mesh and inter-operator parallelisms between device meshes.

In summary, we make the following contributions:

\noindent \(\bullet\) We construct a two-level parallel execution plan space (Fig.~\aref{fig:all-plans}{e}) 
where plans are specified hierarchically using inter- and intra-operator parallelisms.

\noindent \(\bullet\) We design tractable optimization algorithms to derive near-optimal execution plans at each level.

\noindent \(\bullet\) We implement \sys, a compiler system  for distributed DL on GPU clusters. \sys features: (1) a set of compilation passes that generate execution plans using the hierarchical optimization algorithms, (2) a new runtime architecture that orchestrates the inter-op parallelism between device meshes, and (3) a number of system optimizations that improve compilation and address cross-mesh communication.

\noindent \(\bullet\) We evaluate \sys on training large models with billions of parameters. We compare \sys with state-of-the-art distributed training systems on an Amazon EC2 cluster of 8 p3.16xlarge instances with 64 GPUs.
On GPT~\cite{brown2020language} models, \sys can match 
the specialized system Megatron-LM~\cite{shoeybi2019megatron, narayanan2021efficient}.
On GShard MoE models~\cite{lepikhin2020gshard}, compared to a hand-tuned system Deepspeed\cite{rasley2020deepspeed}, \sys achieves a $3.5\times$ speedup on 2 nodes and a $9.7\times$ speedup on 4 nodes.
Unlike specialized systems, \sys also generalizes to models without manual strategies and achieves an 80\% linear scaling efficiency on Wide-ResNet~\cite{zagoruyko2016wide} with 4 nodes. This means developers can get efficient model-parallel execution of large DL models out-of-the-box using \sys. 

\begin{figure}[t]
	\centering
    \includegraphics[width=0.95\linewidth]{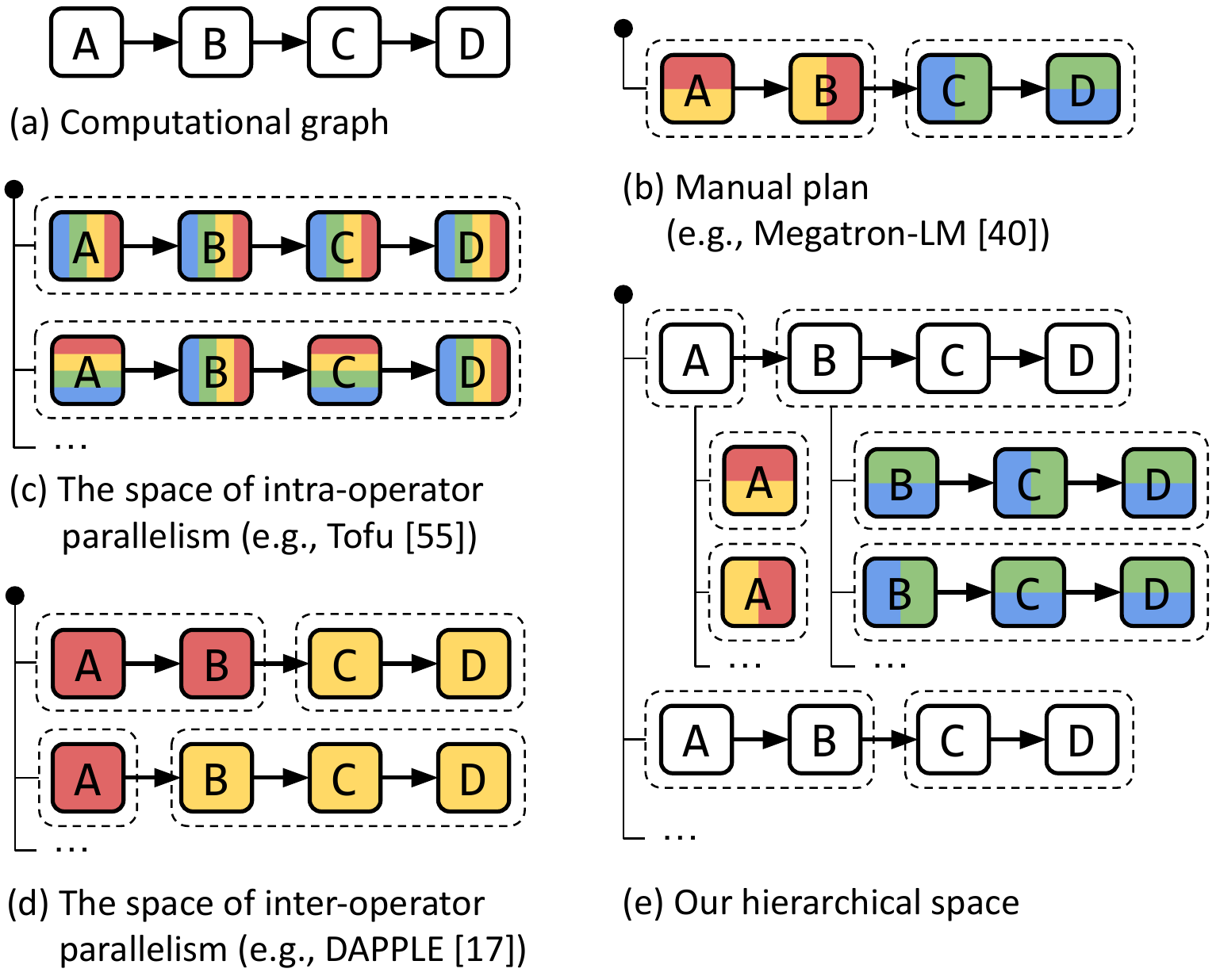}
    \vskip -0.0em
	\caption{Generation of parallelization plans for a computational graph shown in (a). Different colors represent different devices, dashed boxes represent pipeline stages. (b) creates the plan manually. (c) and (d) automatically generate plans using only one of intra- and inter-operator parallelisms. (e) shows our approach that creates a hierarchical space to combine intra- and inter-operator parallelisms.}
	\vskip -1.0em
    \label{fig:all-plans}
\end{figure}


\section{Background: Distributed Deep Learning}
\label{sec:background}

\begin{figure*}[t]
	\centering
	\includegraphics[width=0.95\textwidth]{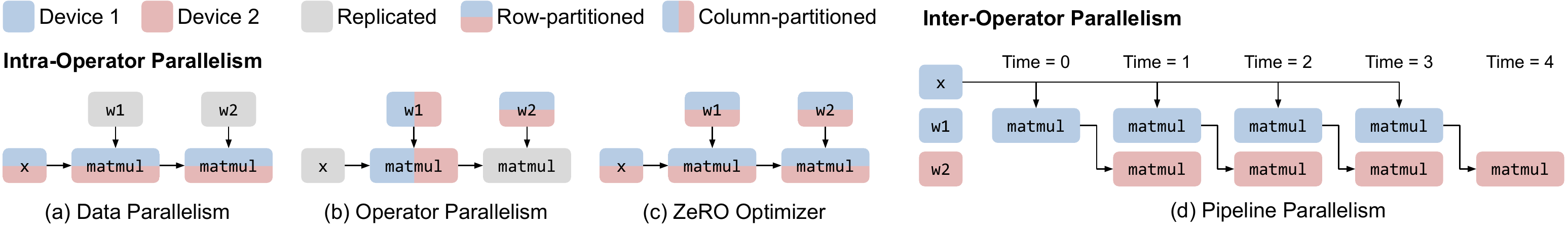}
	\vskip -0.5em
	\caption{Common parallelization techniques for training a 2-layer Multi-layer Perceptron (MLP). Only the forward pass is shown. ``x'' is the input data. ``w1'' and ``w2'' are two weight matrices.}
	\vskip -1em
	\label{fig:parallel-patterns}
\end{figure*}

DL computation is commonly represented by popular ML frameworks~\cite{abadi2016tensorflow,paszke2019pytorch,jax2018github} as a dataflow graph. Edges in the graph represent multi-dimensional tensors; nodes are computational operators, such as matrix multiplication (matmul), that transform input tensors into output tensors. 
Training a DL model for one iteration consists of computing a loss by \emph{forwarding} a batch of data through the graph, deriving the updates via a reverse \emph{backward} pass, and applying the updates to the parameters via \emph{weight update} operations.
In practice, model developers define the dataflow graph. An execution engine then optimizes and executes it on a compute device.

When either the model or data is large that a single device cannot complete the training in a reasonable amount of time, we resort to ML parallelization approaches that parallelize the computation on distributed devices.

\subsection{Conventional View of ML Parallelism}
\label{sec:conventional}
Existing ML parallelization approaches are typically categorized as data, operator, and pipeline parallelisms.

\noindent \textbf{Data parallelism.} In data parallelism, the training data is partitioned across distributed workers, but the model is replicated. Each worker computes the parameter updates on its independent data split, and synchronizes the updates with other workers before the weight update, so that all workers observe consistent model parameters throughout training.

\noindent \textbf{Operator parallelism.} When the model is too large to fit in one device, operator parallelism is an effective model parallelism option. Operator parallelism refers to approaches that partition the computation of a specific operator (abbreviated as \emph{op} in the following text), such as matmul shown in Fig.~\ref{fig:parallel-patterns}b, along \emph{non-batch} axes, and compute each part of the operator in parallel across multiple devices. 

Because input tensors are jointly partitioned, when a device computes its op partition, the required portions of input tensors may not reside in its local memory. Communication is thus required to fetch the input data from other devices.
When the tensors are partitioned evenly, i.e., SPMD\cite{xu2021gspmd}, all devices will follow the same collective communication patterns such as all-reduce, all-gather, and all-to-all.

\noindent \textbf{Pipeline parallelism.} Instead of partitioning ops, pipeline parallelism places different groups of ops from the model graph, referred as \emph{stages}, on different workers; meanwhile, it splits the training batch as a number of microbatches, and pipelines the forward and backward passes across microbatches on distributed workers, as Fig.~\aref{fig:parallel-patterns}{d} shows.
Unlike operator parallelism, pipeline parallelism transfers intermediate activations at the forward and backward passes between different workers using point-to-point communication.

\noindent \textbf{Manual combination of parallelisms.}
Recent development shows the approaches mentioned above need to be combined to scale out today's large DL models\cite{narayanan2021efficient, xu2021gspmd}.
The state-of-the-art training systems, such as Megatron-LM~\cite{shoeybi2019megatron,narayanan2021efficient}, manually design a specialized execution plan that combines these parallelisms for transformer language models, which is also known as \emph{3D Parallelism}. By assuming the model has the same transformer layer repeated, it assigns an equal number of layers to each pipeline stage and applies a hand-designed operator and data parallelism configuration uniformly for all layers. Despite the requirement of strong expertise, the manual plan cannot generalize to different models or different cluster setups (\S\ref{subsec:e2e}).

\noindent \textbf{Automatic combination of parallelisms.}
The configurations of each individual parallelism, their interdependence, and their dependence on model and cluster setups form an intractable space, which prevents the trivial realization of automatically combining these parallelisms.
For examples, when coupled with operator parallelism, each time adding a data-parallel replica would require  allocating a new set of devices (instead of one single device) as the worker, and figuring out the optimal operator parallelism configurations within those devices. When including pipeline parallelism, the optimal pipelining scheme depends on the data and operator parallelism choices of each pipeline stage and how devices are allocated for each stage. 
With this conventional view, prior explorations~\cite{jia2018beyond,wang2019supporting,fan2021dapple,zhang2020autosync} of auto-parallelization are limited to combining data parallelism with at most one model parallelism approach, which misses substantial performance opportunities. We next develop our view of ML parallelisms.

\subsection{Intra- and Inter-Operator Parallelisms}
\label{sec:background:intra-inter}
Different from the conventional view, in this paper, we re-catalog existing parallelization approaches into two orthogonal categories: intra-operator and inter-operator parallelisms. They are distinguished by if they involve partitioning operators along any tensor axis. 
We next use the examples in Fig.~\ref{fig:parallel-patterns} to introduce the two types of parallelisms.
 
\noindent \textbf{Intra-operator parallelism.}
An operator works on multi-dimensional tensors. We can partition the tensor along some dimensions, assign the resulting partitioned computations to multiple devices, and let them execute different portions of the operator at the same time. We define all parallelization approaches using this workflow as intra-operator parallelism.

Fig.~\aref{fig:parallel-patterns}{a-c} illustrates the application of several typical instantiations of intra-op parallelism on an MLP. 
Data parallelism~\cite{krizhevsky2014one}, by definition, belongs to intra-op parallelism -- the input tensors and matmuls are partitioned along the batch dimension, and weight tensors are replicated. 
Alternatively, when the weights are very large, partitioning the weights (Fig.~\aref{fig:parallel-patterns}{b}) leads to the aforementioned operator parallelism adopted in Megatron-LM.
Besides operators in the forward or backward passes, one can also partition the operators from the weight update phase, yielding the \emph{weight update sharding} or equivalently the ZeRO~\cite{rajbhandari2020zero, xu2020automatic} technique, commonly comprehended as an optimization of data parallelism.

Due to the partitioning, collective communication is required at the split and merge of the operator. Hence, a key characteristic of intra-operator parallelism is that it results in substantial communication among distributed devices.

\noindent \textbf{Inter-operator parallelism.}
We define inter-operator parallelism as the orthogonal class of approaches that \emph{do not} perform operator partitioning, but instead, assign different operators of the graph to execute on distributed devices.

Fig.~\aref{fig:parallel-patterns}{d} illustrates the batch-splitting pipeline parallelism as a case of inter-operator parallelism.\footnote{Device placement~\cite{mirhoseini2017device} is another case of inter-op parallelism, which partitions the model graph and executes them on different devices but does not saturate pipelines using multiple microbatches. Hence pipeline parallelism is often seen as a better alternative to it because of less device idle time.}
The pipeline execution can follow different schedules, such as Gpipe~\cite{huang2019gpipe}, PipeDream~\cite{narayanan2019pipedream}, and synchronous 1F1B~\cite{narayanan2021memory, fan2021dapple}. We adopt the synchronous 1F1B schedule throughout this paper as it respects synchronous consistency, and has the same pipeline latency but lower peak memory usage compared to Gpipe.

In inter-operator parallelism, devices communicate only between pipeline stages, typically using point-to-point communication between device pairs.
The required communication volume can be much less than the collective communication in intra-operator parallelism.
Regardless of the schedule used, due to the data dependency between stages, inter-operator parallelism results in some devices being idle during the forward and backward computation.

By this categorization, the two parallelisms take place at different granularities of the DL computation and have distinct communication requirements, which happen to match the structure of today's compute clusters. We will leverage these properties to design hierarchical algorithms and compilation passes to auto-generate execution plans. Several concurrent work \cite{li2021terapipe, narayanan2021memory, athlur2022varuna, tarnawski2021piper} have proposed similar categorization, but \sys is the first end-to-end system that uses this categorization to automatically generate parallel plans from the full space.

\section{Overview}
\label{sec:overview}

\begin{figure}[t]
	\centering
	\includegraphics[width=\columnwidth]{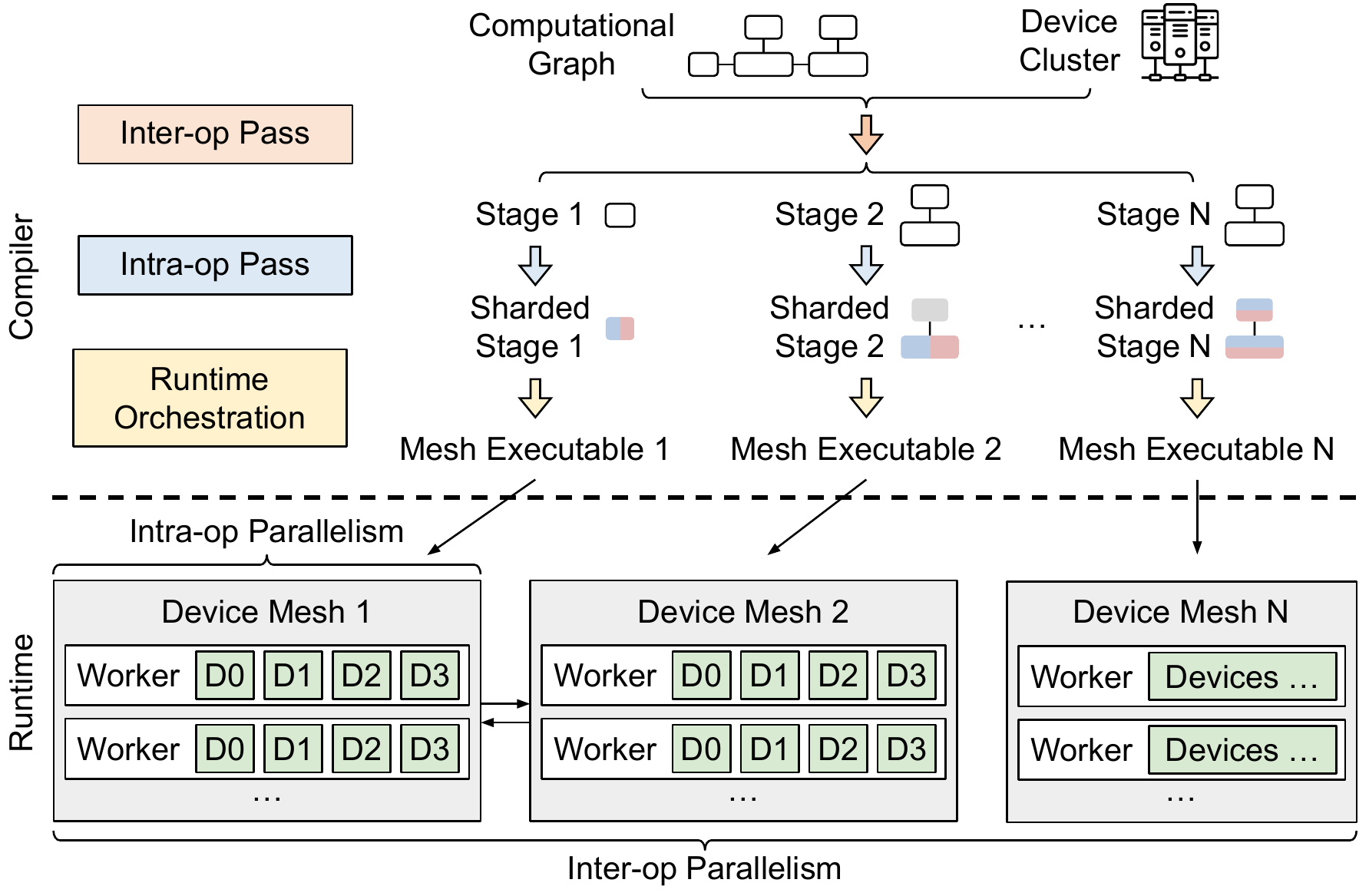}
    \vskip -0.8em
	\caption{Compiler passes and runtime architecture. A sharded stage is a stage annotated with the sharding specs generated by intra-op pass.}
	\label{fig:overview}
	\vskip -0.5em
\end{figure}

\begin{figure}[t]
	\centering
	\includegraphics[width=\columnwidth]{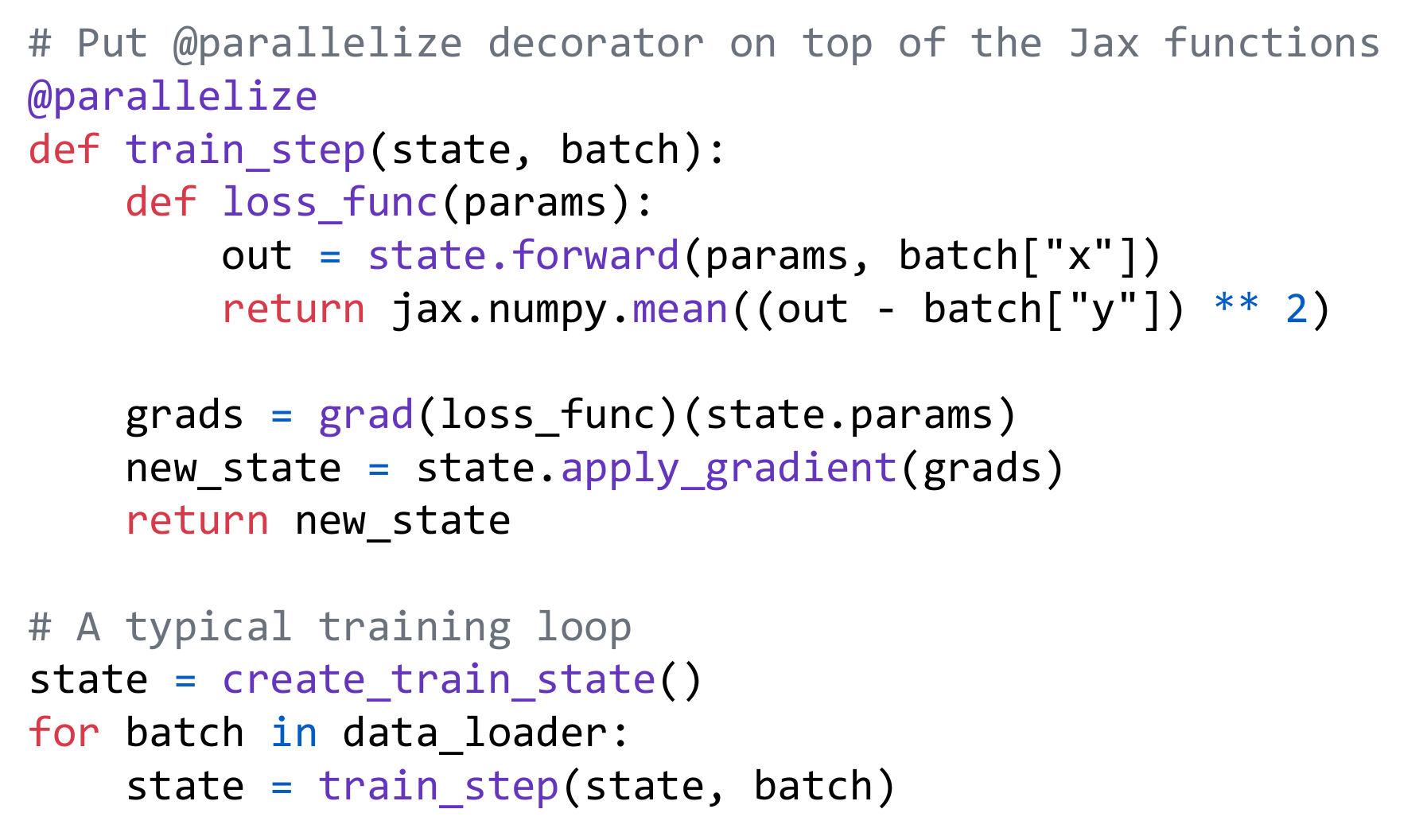}
	\vskip -0.8em
	\caption{An example to demonstrate \sys's API for Jax. The developers uses a Python decorator \texttt{@parallelize} to annotate functions that need to be parallelized. The rest of the program is kept intact.}
	\vskip -1.0em
	\label{fig:api_example}
\end{figure}

\sys is a compiler that generates model-parallel execution plans by hierarchically optimizing the plan at two different levels: intra-op and inter-op parallelism. 
At the intra-op level, \sys minimizes the cost of executing a stage (i.e., subgraph) of the computational graph, with respect to its intra-operator parallelism plan, on a given device mesh, which is a set of devices that may have high bandwidth between each other (e.g., GPUs within a single server). Different meshes might have different numbers of computing devices according to the workload assigned.
At the inter-op level, \sys minimizes the inter-op parallelization latency, with respect to how to slice the model and device cluster into stages and device meshes and how to map them as stage-mesh pairs. The inter-op optimization depends on knowing the execution cost of each stage-mesh pair reported by the intra-op optimizer. Through this hierarchical optimization process, \sys generates the execution plan consisting of intra-op and inter-op plans which are locally near-optimal at their respective level of the hierarchy.

To achieve this, \sys implements three novel compilation passes as Fig.~\ref{fig:overview} shows.
Given a model description, in the form of a Jax~\cite{jax2018github} intermediate representation (IR), and a cluster configuration, the inter-op compilation pass slices the IR into a number of stages, and slices the device cluster into a number of device meshes.
The inter-op pass uses a Dynamic Programming (DP) algorithm to assign stages to meshes and invokes the intra-op compilation pass on each stage-mesh pair, to query the execution cost of this assignment. Once invoked, the intra-op pass optimizes the intra-op parallel execution plan of the stage running on its assigned mesh, by minimizing its execution cost using an Integer Linear Programming (ILP) formulation, and reports the cost back to the inter-op pass. By repeatedly querying the intra-op pass for each allocation of a stage-mesh pair, the inter-op pass uses the DP to minimize the inter-op parallel execution latency and obtains the best slicing scheme of stages and meshes.

Given the output hierarchical plan and a designated pipeline-parallel schedule, each stage is first compiled as a parallel executable on its located mesh. A runtime orchestration pass is invoked to fulfill the communication requirement between two adjacent stages that require communication between the two meshes they locate on.
The runtime orchestration pass then generates static instructions specific to each mesh according to the pipeline-parallel schedule and invokes the execution on all meshes.

\noindent \textbf{API.} \sys has a simple API shown in Fig.~\ref{fig:api_example}. 
\sys requires developers to annotate functions to be parallelized, such as the {\tt train\_step()}, using a Python decorator {\tt @parallelize}.
Upon the first call to \texttt{train\_step()}, \sys traces the whole function to get the model IR, invokes the compilation, and converts the function to a parallel version.

Since the inter-op pass depends on the intra-op pass, in the following text, we first describe the intra-op pass, followed by the inter-op pass, and finally the runtime orchestration pass.


\section{Intra-Operator Parallelism}
\label{sec:intra-op}

\sys optimizes the intra-operator parallelism plan within a device mesh. \sys adopts the SPMD-style intra-op parallelism ~\cite{lepikhin2020gshard,xu2021gspmd} which partitions operators evenly across devices and executes the same instructions on all devices, as per the fact that devices within a single mesh have equivalent compute capability.
This SPMD style significantly reduces the space of intra-op parallelism plans; meanwhile, it conveniently expresses and unifies many important approaches such as data parallelism, ZeRO, Megatron-LM's operator parallelism, and their combinations, which are not fully covered by existing automatic operators parallelism systems, such as Tofu~\cite{wang2019supporting} and FlexFlow~\cite{jia2018beyond}.
Unlike systems that perform randomized search~\cite{jia2018beyond} or assume linear graphs~\cite{wang2019supporting}, \sys formalizes the problem as an integer linear programming (ILP) and shows it can be solved efficiently for computational graphs with tens of thousands of operators.
Next, we describe the space of intra-op parallelism and our solution.

\begin{table}[t]
	\renewcommand{\arraystretch}{1.4}
	\setlength{\tabcolsep}{3pt}
	\centering
	\footnotesize
	\caption{Sharding specs of a 2-dimentional tensor on a $2\times2$ device mesh. $A$ is a $(N, M)$ tensor. The device mesh is [[Device 0, Device 1], [Device 2, Device 3]]. Each device stores a partition of $A$. The first column is the name of the sharding spec. The latter columns use Numpy syntax to describe the partitions stored on each device.}
	\vskip -0.5em
	\begin{tabular}{ccccc}
		\toprule
		Spec & Device 0 & Device 1 & Device 2 & Device 3 \\
		\midrule
		$RR$ & $A[0:N, 0:M]$ & $A[0:N, 0:M]$ & $A[0:N, 0:M]$ & $A[0:N, 0:M]$ \\
		$S^0S^1$ & $A[0:\frac{N}{2}, 0:\frac{M}{2}]$ & $A[0:\frac{N}{2}, \frac{M}{2}:M]$ & $A[\frac{N}{2}:N, 0:\frac{M}{2}]$ & $A[\frac{N}{2}:N, \frac{M}{2}:M]$ \\
		$S^1S^0$ & $A[0:\frac{N}{2}, 0:\frac{M}{2}]$ & $A[\frac{N}{2}:N, 0:\frac{M}{2}]$ & $A[0:\frac{N}{2}, \frac{M}{2}:M]$ & $A[\frac{N}{2}:N, \frac{M}{2}:M]$ \\
		$S^0R$ & $A[0:\frac{N}{2}, 0:M]$ & $A[0:\frac{N}{2}, 0:M]$ & $A[\frac{N}{2}:N, 0:M]$ & $A[\frac{N}{2}:N, 0:M]$ \\
        $S^1R$ & $A[0:\frac{N}{2}, 0:M]$ &$A[\frac{N}{2}:N, 0:M]$ & $A[0:\frac{N}{2}, 0:M]$ & $A[\frac{N}{2}:N, 0:M]$ \\
        $RS^0$ & $A[0:N, 0:\frac{M}{2}]$ & $A[0:N, 0:\frac{M}{2}]$ & $A[0:N, \frac{M}{2}:M]$ & $A[0:N, \frac{M}{2}:M]$ \\
	    $RS^1$ & $A[0:N, 0:\frac{M}{2}]$ &$A[0:N, \frac{M}{2}:M]$ & $A[0:N, 0:\frac{M}{2}]$ & $A[0:N, \frac{M}{2}:M]$ \\
        $S^{01}R$ & $A[0:\frac{N}{4}, 0:M]$ &$A[\frac{N}{4}:\frac{N}{2}, 0:M]$ & $A[\frac{N}{2}:\frac{3N}{4}, 0:M]$ & $A[\frac{3N}{4}:N, 0:M]$ \\
        $RS^{01}$ & $A[0:N,0:\frac{M}{4}]$ &$A[0:N,\frac{M}{4}:\frac{M}{2}]$ & $A[0:N, \frac{M}{2}:\frac{3M}{4}]$ & $A[0:N, \frac{3M}{4}:M]$ \\
		\bottomrule
	\end{tabular}
	\vskip -1em
	\label{table:sharding-specs}
\end{table}

\begin{table}[t]
	\renewcommand{\arraystretch}{1.2}
	\centering
	\footnotesize
	\caption{Several cases of resharding. $all\mhyphen gather(x, i)$ means an all-gather of $x$ bytes along the $i$-th mesh axis. $M$ is the size of the tensor. $(n_0, n_1)$ is the mesh shape.}
	\vskip -0.5em
	\begin{tabular}{cccc}
		\toprule
		\# & Src Spec & Dst Spec & Communication Cost \\
		\midrule
		1 & $RR$ & $S^0S^1$ & $0$ \\
		2 & $S^0R$ & $RR$ & $all\mhyphen gather (M, 0)$ \\
		3 & $S^0S^1$ & $S^0R$ & $all\mhyphen gather (\frac{M}{n_0}, 1)$ \\
		4 & $S^0R$ & $RS^0$ & $all\mhyphen to \mhyphen all(\frac{M}{n_0}, 0)$ \\
		5 & $ S^0S^1$ &  $S^{01}R$ & $all\mhyphen to \mhyphen all(\frac{M}{n_0 \cdot n_1}, 1)$ \\
		\bottomrule
	\end{tabular}
	\vskip -1.5em
	\label{table:resharding-cost}
\end{table}

\subsection{The Space of Intra-Operator Parallelism}
Given an operator in the computational graph, there are multiple possible parallel algorithms to run it on a device mesh.
For example, a matrix multiplication $C_{ij}=\sum_k A_{ik}B_{kj}$ corresponds to a three-level for-loop. To parallelize it, we can parallelize the loop $i$, loop $j$, loop $k$, or combinations of them across devices, which would have different computation and communication costs,  require different layouts for the input tensors, and result in output tensors with different layouts.
If an input tensor does not satisfy the layout requirement, a layout conversion is required, which introduces extra communication costs.
The goal of the intra-op pass is to pick one parallel algorithm for every operator to minimize the execution time of the entire graph.
Next, we formally define the device mesh and the layout of a tensor and discuss the cost of layout conversion.

\noindent \textbf{Device mesh}. A device mesh is a 2-dimensional logical view of a set of physical devices. Each device in the mesh has the same compute capability. Devices can communicate along the first mesh dimension and the second mesh dimension with different bandwidths. We assume different groups of devices along the same mesh dimension have the same communication performance. For a set of physical devices, there can be multiple logical views. For example, given 2 nodes and 8 GPUs per node (i.e., 16 devices in total), we can view them as a $2\times8$, $1\times 16, 4\times4, 8\times2$, or $16\times 1$ device mesh. The mapping between physical devices and the logical device mesh view is optimized by the inter-op pass (\S\ref{sec:inter-op}). In the rest of this section, we consider one fixed device mesh view.

\noindent \textbf{Sharding Spec}. We use \emph{sharding spec} to define the layout of a tensor.
For an $N$-dimensional tensor, its sharding spec is defined as $X_0X_1 \cdots X_{n-1}$, where $X_i \in \{S, R\}$. If $X_i = S$, it means the $i$-th axis of the tensor is partitioned. Otherwise, the $i$-th axis is replicated. 
For example, for a 2-dimensional tensor (i.e., a  matrix), $SR$ means it is row-partitioned, $RS$ means it is column-partitioned, $SS$ means it is both row- and column- partitioned. $RR$ means it is replicated without any partitioning.
After we define which tensor axes are partitioned, we then have to map the partitioned tensor axes to mesh axes. We only consider 2-dimensional device meshes, so a partitioned tensor axis can be mapped to either the first or the second axis of the device mesh, or both.
We added a superscript to $S$ to denote the device assignment. For example, $S^0$ means the partitions are along the $0$-th axis of the mesh,
$S^{01}$ means the partitions take place along both mesh axes.
$S^0R$ means the tensor is row-partitioned into two parts -- The first part is replicated on device 0 and device 1, and the second part is replicated on device 2 and device 3.
Table~\ref{table:sharding-specs} shows all possible sharding specs of a 2-dimensional tensor on a $2 \times 2$ mesh with 4 devices.

\noindent \textbf{Resharding.} When an input tensor of an operator does not satisfy the sharding spec of the chosen parallel algorithm for the operator, a layout conversion, namely \emph{resharding}, is required, which might require cross-device communication. 
Table~\ref{table:resharding-cost} lists several cases of resharding. For instance, to convert a fully replicated tensor to any other sharding specs (case \#1), we can slice the tensor locally without communication; to swap the partitioned axis (case \#4), we perform an all-to-all.

\begin{table}[t]
	\renewcommand{\arraystretch}{1.0}
	\centering
	\setlength{\tabcolsep}{5pt}
	\footnotesize
	\caption{Several parallel algorithms for a batched matmul $C_{b,i,j} = \sum_k A_{b,i,k} B_{b,k,j}$. The notation $all\mhyphen reduce(x, i)$ means an all-reduce of $x$ bytes along the $i$-th mesh axis. $M$ is the size of the output tensor. $(n_0, n_1)$ is the mesh shape.}
	\vskip -1em
	\begin{tabular}{ccccc}
		\toprule
		\multirow{2}{*}{\#} & Parallel   & Output  & Input   & Communication   \\
		& Mapping & Spec     & Specs & Cost \\
		\midrule
		1 & $i \rightarrow 0, j \rightarrow 1$
		& $RS^0S^1$
		&	$RS^0R,RRS^1$ & 0 \\
		2 & $i \rightarrow 0, k \rightarrow 1$  & $RS^0R$      &   $RS^0S^1,RS^1R$ & $all\mhyphen reduce(\frac{M}{n_0}, 1)$ \\
		3 & $j \rightarrow 0, k \rightarrow 1$  & $RRS^0$      &   $RRS^1,RS^1S^0$ & $all\mhyphen reduce(\frac{M}{n_0}, 1)$ \\
		4 & $b \rightarrow 0, i \rightarrow 1$  & $S^0S^1R$      &   $S^0S^1R,S^0RR$ & 0 \\
		5 & $b \rightarrow 0, k \rightarrow 1$  & $S^0RR$      &   $S^0RS^1,S^0S^1R$ & $all\mhyphen reduce(\frac{M}{n_0}, 1)$ \\
		6 & $i \rightarrow \{0, 1\}$  & $RS^{01}R$      &   $RS^{01}R,RRR$ & 0 \\
		7 & $k \rightarrow \{0, 1\}$  & $RRR$      &   $RRS^{01},RS^{01}R$ & $all\mhyphen reduce(M, \{0,1\})$ \\
		\bottomrule
	\end{tabular}
	\vskip -1.5em
	\label{table:dot-algorithms}
\end{table}

\noindent \textbf{Parallel algorithms of an operator.}
With the definitions above, consider parallelizing a batched matmul $C_{b,i,j} = \sum_k A_{b,i,k} B_{b,k,j}$ on a 2D mesh -- Table~\ref{table:dot-algorithms} lists several intra-op parallel algorithms for a batched matmul.
Algorithm\#1 maps loop $i$ to the $0$-th mesh axis and loop $j$ to the $1$-th mesh axis,
resulting in the output tensor $C$ with a sharding spec $RS^0S^1$.
As the LHS operand $A_{b,i,k}$ and RHS operand $B_{b,k,j}$ both have only one parallelized index, their sharding specs are written as $RS^0R$ and $RRS^1$, respectively.
In this algorithm, each device has all its required input tiles (i.e., a partition of the tensor) stored locally to compute its output tile, so there is no communication cost.
In Algorithm \#2 in Table~\ref{table:dot-algorithms}, when the reduction loop $k$ is parallelized, all-reduce communication is needed to aggregate the partial sum.
Similarly, we can derive the sharding specs and communication costs of other parallel algorithms for a batched matmul.

For other primitive operators such as convolution and reduction, we can get a list of possible parallel algorithms following a similar analysis of their math expressions.
In the intra-op pass, the model graph is represented in XLA's HLO format\cite{xla2017google}, which summarizes common DL operators into less than 80 primitive operators, so we can manually enumerate the possible parallel algorithms for every primitive operator.

\subsection{ILP Formulation}
The total execution cost of a computational graph $G = (V, E)$ is the sum of the compute and communication costs on all nodes $v \in V$ and the resharding costs on all edges $e \in E$.
We formulate the cost minimization as an ILP and solve it optimally with an off-the-shelf solver \cite{forrest2005cbc}.

For node $v$, the number of possible parallel algorithms is $k_v$. It then has a communication cost vector $c_v$ of length $k_v$, or $c_v \in \mathbb{R}^{k_v}$, where $c_{vi}$ is the communication cost of the $i$-th algorithm. Similarly, node $v$ has a compute cost vector $d_v \in \mathbb{R}^{k_v}$.
For each node $v$, we define an one-hot decision vector $s_v \in \{0,1\}^{k_v}$ to represent the algorithm it uses.
$s_{vi} = 1$ means we pick the $i$-th algorithm for node $v$.
For the resharding cost between node $v$ and node $u$, we define a resharding cost matrix $R_{vu} \in \mathbb{R}^{k_v \times k_u}$, where $R_{vuij}$ is the resharding cost from the output of $i$-th strategy of node $v$ to the input of $j$-th strategy of node $u$. The objective of the problem is
\begin{equation}
\label{eq:ilp-objective}
\underset{s}{\text{min}} ~ \sum_{v \in V} s_v^\intercal(c_v + d_v) + \sum_{(v,u) \in E} s_v^\intercal R_{vu}s_u,
\end{equation}
where the first term is the compute and communication cost of node $v$, and the second is the resharding cost of the edge $(v,u)$. In this formulation, $s$ is the variable, and the rest are constant values.
The term $s_v^\intercal R_{vu}s_u $ in Eq.~\ref{eq:ilp-objective} is quadratic, and cannot be fed into an ILP solver. We linearize~\cite{forrester2020computational} the quadratic term by introducing a new decision vector $e_{vu} \in \{0,1\}^{k_v \cdot k_u}$ which represents the resharding decision between node $v$ and $u$.

Although we can use profiling to get the accurate costs for $c_v$, $d_v$, and $R_{vu}$, we use the following methods to estimate them for simplicity.
For communication costs $c_v$ and $R_{vu}$, we compute the numbers of communicated bytes and divide them by the mesh dimension bandwidth to get the costs.
For compute costs $d_v$, we set all of them as zero following the same motivation in \cite{wang2019supporting}.
This is reasonable because:
(1) For heavy operators such as matmul, we do not allow replicated computation. All parallel algorithms always evenly divide the work to all devices, so all parallel algorithms of one operator have the same arithmetic complexity;
(2) For lightweight operators such as element-wise operators, we allow replicated computation of them, but their computation costs are negligible.

To simplify the graph, we merge computationally-trivial operators, such as element-wise operators, transpose, and reduction, into one of their operands and propagate the sharding spec from the operand.
This greatly reduces the number of nodes in the graph, thus the ILP problem size. 
We do a breath-first-search and compute the depth of each node. The node is merged to the deepest operand.

Once the parallel plan is decided by ILP, we also apply a set of post-ILP communication optimizations, such as replacing all-reduce with reduce-scatter and all-gather, whenever applicable, because the latter reduces the number of replicated tensors and corresponding computations, while keeping the communication volume the same. This achieves the effect of weight update sharding~\cite{xu2020automatic} or ZeRO optimizer~\cite{rajbhandari2020zero}.


\section{Inter-Operator Parallelism}
\label{sec:inter-op}
In this section, we develop methods to slice the model and device cluster into stage-mesh pairs. Our optimization goal is to minimize the \emph{end-to-end} pipeline execution latency for the entire computational graph.
Previous works~\cite{li2021terapipe,fan2021dapple} have considered simplified problems, such as assuming the device for each stage is pre-assigned, and all stages have fixed data or operator parallelism plan. Alpa rids these assumptions by jointly considering device mesh assignment and the existence of varying intra-op parallelism plans on each stage.

\subsection{The Space for Inter-Operator Parallelism}

Assume the computational graph contains a sequence of operators following the topology order of the graph\footnote{We simply use the order of how users define each operator, reflected in the model IR, with the input operator as the origin. This allows us to leverage the inherent locality present in the user's program -- closely related nodes in the graph will be more likely to be partitioned into the same stage.}, notated as $o_1, \ldots, o_K,$ where the inputs of an operator $o_k$ are from operators $o_1, \ldots, o_{k-1}.$ We slice the operators into $S$ stages $s_1, \ldots, s_S,$ where each stage $s_i$ consists of operators $(o_{l_i}, \ldots, o_{r_i}),$ and we assign each stage $s_i$ to a submesh of size $n_i \times m_i$, sliced from a computer cluster that contains devices, notated as the \emph{cluster mesh} with shape $N \times M$.
Let $t_i = t_\mathit{intra}(s_i, \mathit{Mesh}(n_i, m_i))$ be the latency of executing stage $s_i$ on a submesh of $n_i \times m_i$, minimized by the ILP and reported back by the intra-op pass (\S\ref{sec:intra-op}).
As visualized in Fig.~\ref{fig:pipeline-latency-illustration}, assuming we have $B$ different input microbatches for the pipeline, the total minimum latency\footnote{This formulation holds for GPipe and synchronous 1F1B schedules. Other pipeline schedules may require a different formulation.} for the entire computation graph is written as:
\vskip-1.5em
\begin{equation}
\label{eq:pipeline-latency}
T^* = \min_{\substack{s_1, \ldots, s_S; \\ (n_1, m_1), \ldots, (n_S, m_S)}}{\left\{\sum_{i=1}^S t_i + (B - 1) \cdot \max_{1 \le j \le S}\{t_j\}\right\}}. 
\end{equation}
\vskip-0.5em

The overall latency contains two terms: the first term is the total latency of all stages, interpreted as the latency of the first microbatch going through the pipeline; the second term is the pipelined execution time for the rest of $B - 1$ microbatches, which is bounded by the slowest stage (stage 3 in Fig.~\ref{fig:pipeline-latency-illustration}).

We aim to solve Eq.~\ref{eq:pipeline-latency} with two additional constraints: (1) For an operator in the forward pass of the graph, we want to colocate it with its corresponded backward operator on the same submesh.
Since backward propagation usually uses the similar set of tensors during forward propagation, this effectively reduces the amount of communication to fetch the required tensors generated at the forward pass to the backward pass. We use the sum of forward and backward latency for $t_\mathit{intra}$, so Eq.~\ref{eq:pipeline-latency} reflects the total latency, including both forward and backward propagation. (2) We need the sliced submeshes $(n_1, m_1), \ldots, (n_S, m_S)$ to fully cover the $N \times M$ cluster mesh -- we do not waste any compute device resources. We next elaborate on our DP formulation.

\begin{figure}
    \centering
    \includegraphics[width=.95\columnwidth]{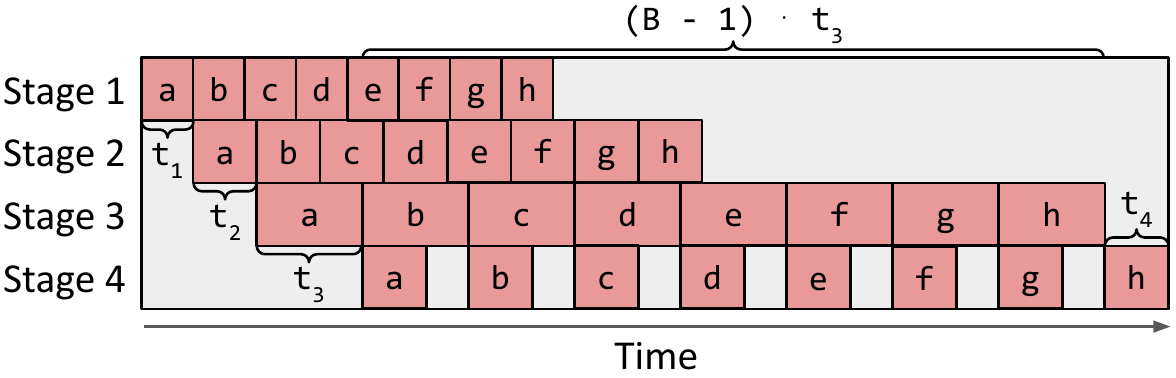}
    \vskip -0.5em
    \caption{Illustration of the total latency of a pipeline, which is determined by two parts: the total latency of all stages ($t_1 + t_2 + t_3 + t_4$) and the latency of the slowest stage ($(B-1)\cdot t_3$).}
    \vskip -0.5em
    \label{fig:pipeline-latency-illustration}
\end{figure}

\subsection{DP Formulation}
To ensure all submeshes $(n_1, m_1), \ldots, (n_S, m_S)$ fully cover the $N \times M$ cluster mesh, we reduce the available submesh shapes into two options: (1) one-dimensional submeshes of sizes $(1, 1), (1, 2), (1, 4) \ldots (1, 2^m)$ and (2) two-dimensional submeshes of size $(2, M), (3, M), \ldots, (N, M)$ that fully use the second dimension of the cluster mesh (i.e., on a GPU cluster, this means using all compute devices in each physical machine).
We include a theorem in Appendix~\ref{sec:proof_submesh_shape} that proves these submesh shapes can always fully cover the cluster mesh. To assign physical devices in the cluster to the resulting submeshes find by the DP algorithm, we enumerate by assigning devices to larger submeshes first and then to smaller ones. When there are multiple pipeline stages with the same submesh shape, we tend to put neighboring pipeline stages closer on the device mesh to reduce communication latency.

The simplification on submesh shapes works well for most available cloud deep learning setups: On AWS~\cite{aws}, the GPU instances have 1, 2, 4, or 8 GPUs; on GCP~\cite{gcp}, the TPU instances have 8, 32, 128, 256 or 512 TPUs. The set of submesh shapes $(n, m)$ excluded by the assumption is with $n > 1$ and $m < M$, which we observe lead to inferior results, since an alternative submesh with shape $(n', M)$ where $n'\cdot M = n\cdot m$ has more devices that can communicate with high bandwidth.
With this reduction, we only need to ensure that $\sum_{i=1}^S n_i \cdot m_i= N \cdot M$.

To find $T^*$ in Eq.~\ref{eq:pipeline-latency}, we develop a DP algorithm. The DP first enumerates the second term $t_\mathit{max} = \max_{1 \le j \le S}{t_j}$ and minimizes the first term $t_\mathit{total}(t_\mathit{max}) = \sum_{1 \le i \le S}{t_i}$ for each different $t_\mathit{max}$. Specifically, we use the function $F(s, k, d; t_\mathit{max})$ to represent the minimal total latency when slicing operators $o_k$ to $o_K$ into $s$ stages and putting them onto $d$ devices so that the latency of each stage is less than $t_\mathit{max}$. We start with $F(0, K + 1, 0; t_\mathit{max}) = 0$, and derive the optimal substructure of $F$ as
\begin{align}
\label{eq:dp}
    & F(s, k, d; t_\mathit{max}) \\ \nonumber 
    = & \min_{\substack{k \le i \le K \\ n_s \cdot m_s \le d}} \left\{
    \begin{aligned} 
    & t_\mathit{intra} ((o_k, \ldots, o_i), \mathit{Mesh}(n_s, m_s), s) \\ 
    & + F(s - 1, i + 1, d - n_s \cdot m_s; t_\mathit{max}) \\ 
    \mid {}&{} t_\mathit{intra} ((o_k, \ldots, o_i), \mathit{Mesh}(n_s, m_s), s) \le t_\mathit{max} 
    \end{aligned}
    \right\},
\end{align}
and derive the optimal total latency as
\begin{equation}
    T^*(t_\mathit{max}) = \min_{s}\{F(s, 0, N \cdot M; t_\mathit{max})\}  + (B - 1) \cdot t_\mathit{max}.
\end{equation}

The value of $t_\mathit{intra} ((o_k, \ldots, o_i), \mathit{Mesh}(n_s, m_s), s)$ is determined by the intra-op pass.
It is the lowest latency of executing the subgraph $(o_k, \ldots, o_i)$ on mesh $\mathit{Mesh}(n_s, m_s)$ with $s$ subsequent stages. Note that $\mathit{Mesh}(n_s, m_s)$ is a set of physical devices -- hence, we enumerate all the potential choices of logical device mesh shapes $(n_l, m_l)$ satisfying $n_l \cdot m_l = n_s \cdot m_s$.
For each choice, we query the intra-op pass with subgraph $(o_k, \ldots, o_i)$, logical mesh $(n_l, m_l)$, and other intra-op options as inputs and get an intra-op plan.
We then compile the subgraph with this plan and all other low-level compiler optimizations (e.g., fusion, memory planning) to get an executable for precise profiling.
The executable is profiled in order to get the stage latency $(t_l)$ and the memory required on each device to run the stage ($\mathit{mem}_\mathit{stage}$) and to store the intermediate activations ($\mathit{mem}_\mathit{act}$).
We check whether the required memory fits the device memory ($\mathit{mem}_\mathit{device}$) according to the chosen pipeline execution schedule. For example, for 1F1B schedule \cite{fan2021dapple, narayanan2021memory}, we check
\begin{equation}
    \mathit{mem}_\mathit{stage} + s\cdot \mathit{mem}_\mathit{act} \le \mathit{mem}_\mathit{device}.
    \label{eq:memory}
\end{equation}
We pick the logical mesh shape that minimizes $t_l$ and fits into the device memory. If none of them  fits, we set $t_\mathit{intra} = \infty$. 

Our algorithm builds on top of that in TeraPipe~\cite{li2021terapipe}. However, TeraPipe assumes all pipeline stages are the same, and the goal is to find the optimal way to batch input tokens into micro-batches of different sizes. Instead, \sys aims to group the operators of a computational graph into different pipeline stages, while assuming the input micro-batches are of the same size. In addition, \sys optimizes the mesh shape in the DP algorithm for each pipeline stage in inter-op parallelism.

\begin{algorithm}[t!]
  \caption{Inter-op pass summary.}
  \label{alg:inter-op}
\begin{algorithmic}[1]
    \STATE {\bfseries Input:} Model graph $G$ and cluster $C$ with shape $(N, M)$.
    \STATE {\bfseries Output:} The minimal pipeline execution latency $T^*$.
    \STATE \emph{// Preprocess graph.}
    \STATE $(o_1, \ldots, o_K) \leftarrow \text{Flatten}(G)$
    \STATE $(l_1, \ldots, l_L) \leftarrow \text{OperatorClustering}(o_1, \ldots, o_K)$
    \STATE  \emph{// Run the intra-op pass to get costs of different stage-mesh pairs.}
    \STATE $\mathit{submesh\_shapes} \leftarrow \{(1, 1), (1, 2), (1, 4), \ldots, (1, M)\} \cup \{(2, M), (3, M), \ldots, (N, M)\}$
    \FOR{$1 \le i \le j \le L$}
        \STATE $\mathit{stage} \leftarrow (l_i, \ldots, l_j)$
        \FOR{$(n, m) \in \mathit{submesh\_shapes}$}
            \FOR{$s$ \textbf{from} $1$ \textbf{to} $L$}
                \STATE $t\_\mathit{intra}(\mathit{stage}, \mathit{Mesh}(n, m), s) \leftarrow \infty$
            \ENDFOR
            \FOR{$(n_l, m_l), \mathit{opt} \in  \text{LogicalMeshShapeAndIntraOp}$  $\text{Options}(n, m)$}
                \STATE $\mathit{plan} \leftarrow \text{IntraOpPass}(\mathit{stage},  \mathit{Mesh}(n_l, m_l), \mathit{opt})$
                \STATE $t_l, \mathit{mem}_\mathit{stage}, \mathit{mem}_\mathit{act} \leftarrow \text{Profile}(plan)$
                \FOR{$s$ satisfies Eq.~\ref{eq:memory}}
                    \IF{$t_l < t\_\mathit{intra}(\mathit{stage}, \mathit{Mesh}(n, m), s)$}
                        \STATE                         $t\_\mathit{intra}(\mathit{stage}, \mathit{Mesh}(n, m), s) \leftarrow t_l$
                    \ENDIF
                \ENDFOR
            \ENDFOR
        \ENDFOR
    \ENDFOR
    \STATE \emph{// Run the inter-op dynamic programming}
    \STATE $T^* \leftarrow \infty$
    \FOR{$t_\mathit{max} \in \text{SortedAndFilter}(t\_\mathit{intra}, \varepsilon)$}
        \IF{$B \cdot t_\mathit{max} \ge T^*$}
            \STATE \textbf{break}
        \ENDIF
        \STATE $F(0, L+1, 0; t_\mathit{max}) \leftarrow 0$
        \FOR{$s$ \textbf{from} $1$ \textbf{to} $L$}
            \FOR{$l$ \textbf{from} $L$ \textbf{down to} $1$}
                \FOR{$d$ \textbf{from} $1$ \textbf{to} $N\cdot M$}
                \STATE Compute $F(s, l, d; t_\mathit{max})$ according to Eq.~\ref{eq:dp}
                \ENDFOR
            \ENDFOR
        \ENDFOR
        \STATE $T^*(t_\mathit{max}) \leftarrow \min_{s}\{F(s, 0, N \cdot M; t_\mathit{max})\} + (B - 1) \cdot t_\mathit{max}$
        \IF{$T^*(t_\mathit{max}) < T^*$}
            \STATE $T^* \leftarrow T^*(t_\mathit{max})$
        \ENDIF
    \ENDFOR
\end{algorithmic}
\end{algorithm}

\noindent \textbf{Complexity.} Our DP algorithm computes the slicing in $O(K^3NM(N+\log(M)))$ time for a fixed $t_\mathit{max}$. $t_\mathit{max}$ has at most $O(K^2(N+\log(M)))$ choices: $t_\mathit{intra}((o_i, \ldots, o_j), \mathit{Mesh}(n_s, m_s))$ for $i, j = 1, \ldots, K$ and all the submesh choices. The complexity of this DP algorithm is thus $O(K^5NM(N+\log(M))^2)$. 

This complexity is not feasible for a large computational graph of more than ten thousand operators. To speed up this DP, we introduce a few practical optimizations.

\noindent \textbf{Performance optimization \#1: early pruning.}
We use one optimization that is similar to that in TeraPipe~\cite{li2021terapipe}.
We enumerate $t_\mathit{max}$ from small to large. When $B \cdot t_\mathit{max}$ is larger than the current best $T^*$, we immediately stop the enumeration. This is because larger $t_\mathit{max}$ can no longer provide a better solution. Also, during enumeration of $t_\mathit{max},$ we only evaluate a choice of $t_\mathit{max}$ if it is sufficiently larger than the last $t_\mathit{max}$ (by at least $\epsilon$).
This allows the gap between the solution found by the DP algorithm and the global optima to be at most $B\cdot \epsilon$.
We empirically choose $\epsilon=10^{-6}\text{\,s}$, and we find that the solution output by our algorithm is the same as the real optimal solution ($\epsilon=0$) for all our evaluated settings.  

\noindent \textbf{Performance optimization \#2: operator clustering.}
Many operators in a computational graph are not computationally intensive (e.g., ReLU), and the exact placement of these operators has little impact on the total execution time.
We develop another DP algorithm~\cite{aydin2016distributed} to cluster neighboring operators to reduce the total size of the graph Eq.~\ref{eq:pipeline-latency} works on. We cluster the operators $(o_1, \ldots, o_K)$ into a series of layers\footnote{Note that the clustering does not exactly reproduce the layers with original machine learning semantics in the model definition.} $(l_1, \ldots, l_L),$ where $L \ll K.$ The goal of the algorithm is to merge two types of operators: (1) those that do not call for much computation but lengthen the computational graph and (2) neighboring operators that may cause substantial communication if put on different device meshes.
We define function $G(k, r)$ as the minimum of maximal amount of data received by a single layer when clustering operators $(o_1, \ldots, o_k)$ into $r$ layers. Note that $G$ has the following optimal substructure:
\begin{align}
    & G(k, r) \\
    = & \min_{1 \le i \le k} \left\{\begin{aligned}
        &\max\{G(i - 1, r - 1), C(i, k)\} \\ \biggm|{} & \mathit{FLOP}(o_i, \ldots, o_k) \le \frac{ (1 + \delta) \mathit{FLOP}_{\mathit{total}}}{L}
    \end{aligned} \right\}, \nonumber
\end{align}
where $C(i, k)$ denotes the total size of inputs of $(o_i,$ $\ldots, o_k)$ received from $(o_1, \ldots, o_{i-1})$ and $\mathit{FLOP}_\mathit{total} $ $= \mathit{FLOP}(o_1, \ldots, o_K)$ is the total FLOP of the whole computational graph. We make sure that each clustered layer's FLOP is within $1 + \delta$ times of the average FLOP per layer while minimizing the communication. For the solutions with the same communication cost, we choose the one with the most uniform structure by also minimizing the variance of per-layer FLOP.
With our DP algorithm, we can compute the best layer clustering in $O(K^2L)$ time. Note that $L$ here is a hyperparameter to the algorithm. In practice, we choose a small $L$ based on the number of devices and the number of heavy operators in the graph. We find different choices of $L$ do not affect the final performance significantly.

Alg.~\ref{alg:inter-op} summarizes the workflow of the inter-op pass and illustrates its interactions with the intra-op pass in \S\ref{sec:intra-op}.

\section{Parallelism Orchestration}
\label{sec:parallelism-orchestration}
After stages, device meshes, and their assignments are decided, at the intra-op level, \sys compiles each stage against its assigned device mesh, respecting the intra-op parallelism plan output by the ILP solver. The compilation depends on XLA~\cite{xla2017google} and GSPMD~\cite{xu2021gspmd}, and generates parallel executables for each stage-mesh pair. When needed, the compilation automatically inserts collective communication primitives (see \S\ref{sec:intra-op}) to address the \emph{within-mesh} communication caused by intra-op parallelism.

At the inter-op level, \sys implements an additional parallelism orchestration pass to address the \emph{cross-mesh} communication between stages, and generate static instructions for inter-op parallel execution. 

\begin{figure}
	\centering
	\includegraphics[width=.95\columnwidth]{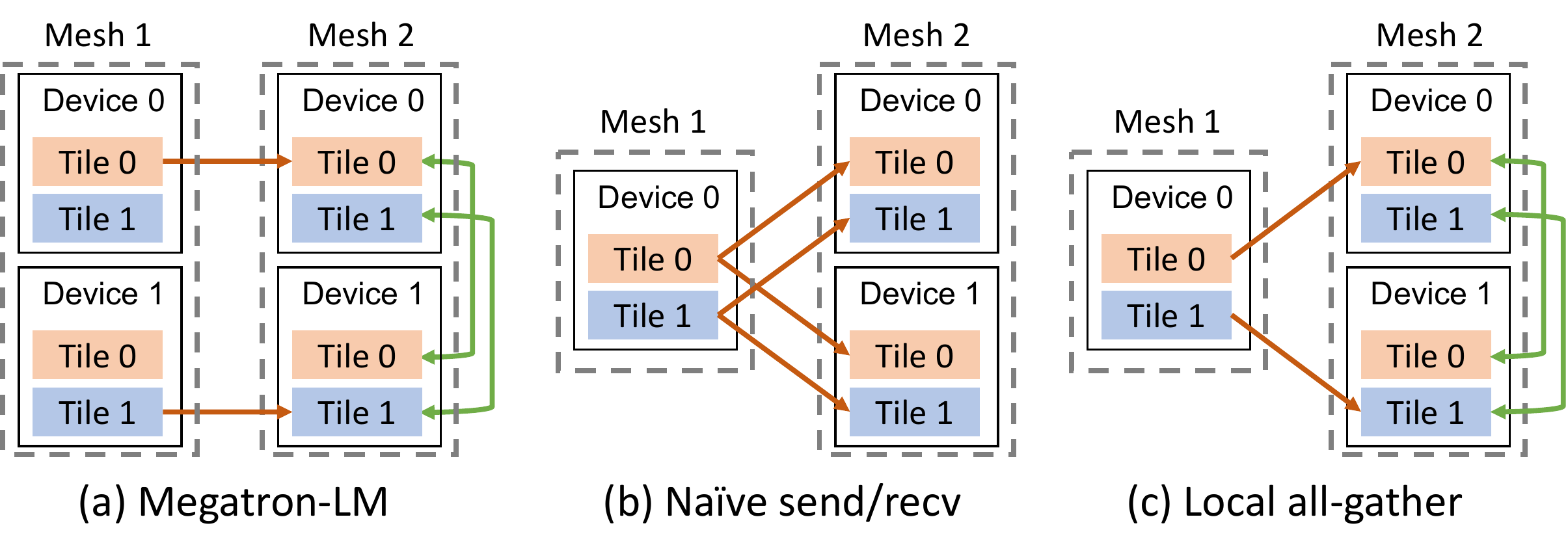}
	\vskip -0.5em
	\caption{Cross-mesh resharding. Red arrows denote send/recv on slow connections. Green arrows denote all-gather on fast connections. (a) The scatter-gather optimization for equal mesh shapes in Megatron-LM. (b) The naive send/recv for unequal mesh shapes. (c) The generalized local all-gather optimization for unequal mesh shapes.}
	\vskip -0.5em
	\label{fig:cross_mesh_resharding}
\end{figure}

\noindent \textbf{Cross-mesh resharding.}
Existing manual systems, such as Megatron-LM~\cite{rasley2020deepspeed,shoeybi2019megatron}, 
constrain all pipeline stages to have the same degrees of data and tensor model parallelism, so the communication between pipeline stages is trivially realized by P2P send/recv between corresponded devices of two equivalent device meshes (Fig.~\ref{fig:cross_mesh_resharding}a). In \sys, the device meshes holding two adjacent stages might have different mesh shapes, and the tensor to communicate between two stages might have different sharding specs (Fig.~\ref{fig:cross_mesh_resharding}b and Fig.~\ref{fig:cross_mesh_resharding}c). We call this communication pattern as \emph{cross-mesh resharding}, which is a many-to-many multicast problem.

Given the sharding specs of the tensor on the sender and receiver mesh, \sys generates a communication plan to address cross-mesh sharding in two iterations. 
In the first iteration, \sys calculates the correspondences between tensor partitions (a.k.a. tiles) on the source and destination mesh. Based on that, it generates P2P send/recv primitives between the source devices and destination devices to fulfill the communication.
It then takes a second iteration to identify opportunities where the destination tensor has a replication in its sharding spec. In this case, the tensor only needs to be transferred once between two meshes, then exchanged via all-gather across the devices on the destination mesh using its higher bandwidth (Fig.~\ref{fig:cross_mesh_resharding}) -- it rewrites send/recv generated at the first iteration into all-gather to avoid repeated communication. 

We call this approach as \emph{local all-gather} cross-mesh resharding.
Since the communication between stages is normally small by our design,  our experiments show that it performs satisfactorily well (\S\ref{sec:eval:cross-mesh-resharding}). We defer the development of the optimal cross-mesh resharding plan to future work.

\noindent \textbf{Generating pipeline execution instructions.}
As the final step, \sys generates static execution instructions to launch the training on clusters.
Since each stage has different sets of operators and may locate on meshes with different shapes, in contrast to many SPMD pipeline-parallel training systems~\cite{xu2021gspmd,narayanan2021efficient}, \sys adopts an MPMD-style runtime to orchestrate the inter-op parallel execution -- \sys generates distinct static execution instructions for each device mesh.

\sys develops a set of instructions for inter-op parallel execution, including instructions for allocating and deallocating memory for tensors in a stage, communicating tensors between stages following the cross-mesh resharding plan, synchronization, and computation, etc. According to a user-selected pipeline schedule, \sys uses a driver process to generate the instructions in advance and dispatches the whole instruction lists to each worker before execution, avoiding driver-worker coordination overheads during runtime.

\section{Limitations and Discussion}
In this section, we discuss advantages of our view of parallelisms and several limitations of our algorithms.

Compared to existing work that manually combines data, operator, and pipeline parallelism, such as 3D parallelism~\cite{rasley2020deepspeed} and PTD-P~\cite{narayanan2021efficient}, Alpa's hierarchical view of inter- and intra-op parallelisms significantly advances them with three major flexibility: 
(1) pipeline stages can contain an uneven number of operators or layers; 
(2) pipeline stages in Alpa might be mapped to device meshes with different shapes; 
(3) within each stage, the data and operator parallelism configuration is customized non-uniformly on an operator-by-operator basis.
Together, they allow Alpa to unify all existing model parallelism approaches and generalize to model architectures and cluster setups with more heterogeneity.

Despite these advantages, Alpa's optimization algorithms currently have a few limitations:

\noindent \(\bullet\) 
Alpa does not model the communication cost between different stages
because the cross-stage communication cost is \emph{by nature small}. In fact, modeling the cost in either the DP or ILP is possible, but would require enumerating exponentially more intra-op passes and DP states.

\noindent \(\bullet\) The inter-op pass currently has a hyperparameter: the number of micro-batches $B$, which is not optimized by our current formulation but can be searched by enumeration.

\noindent \(\bullet\) 
The inter-op pass models pipeline parallelism with a static linear schedule, without considering more dynamic schedules that, for example, parallelize different branches in a computational graph on different devices.

\noindent \(\bullet\) Alpa does not optimize for the best scheme of overlapping computation and communication; Alpa can only handle static computational graphs with all tensor shapes known at compilation time.

Nevertheless, our results on weak scaling (\S\ref{sec:evaluation}) suggest that \sys is able to generate near-optimal execution plans for many notable models.


\section{Evaluation}
\label{sec:evaluation}
\begin{figure*}
	\centering
	\begin{subfigure}[b]{0.33\textwidth}
		\centering
		\includegraphics[width=\textwidth]{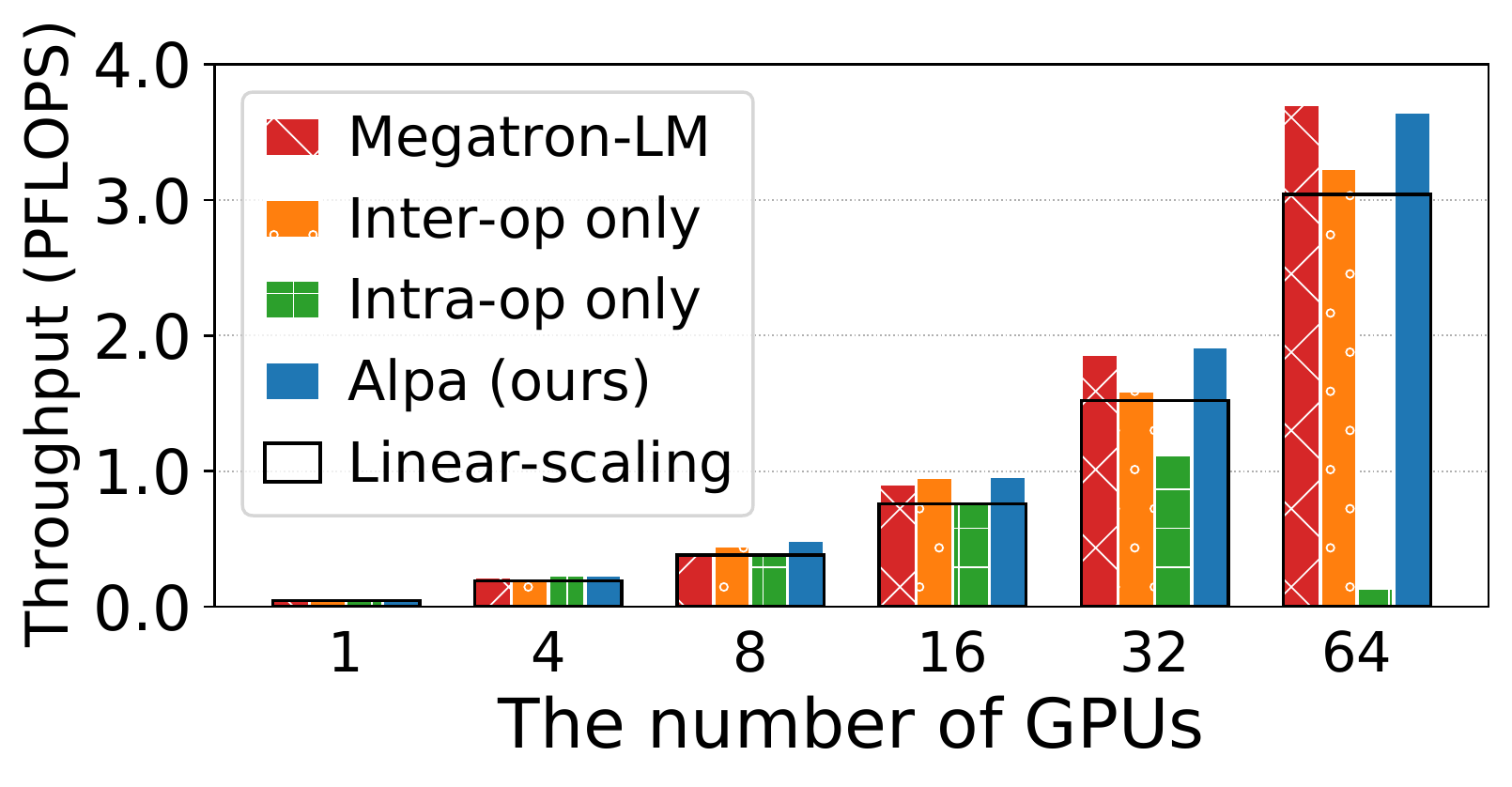}
		\vskip -0.6em
		\caption{GPT}
		\label{fig:gpt-end-to-end-result}
	\end{subfigure}
	\begin{subfigure}[b]{0.33\textwidth}
		\centering
		\includegraphics[width=\textwidth]{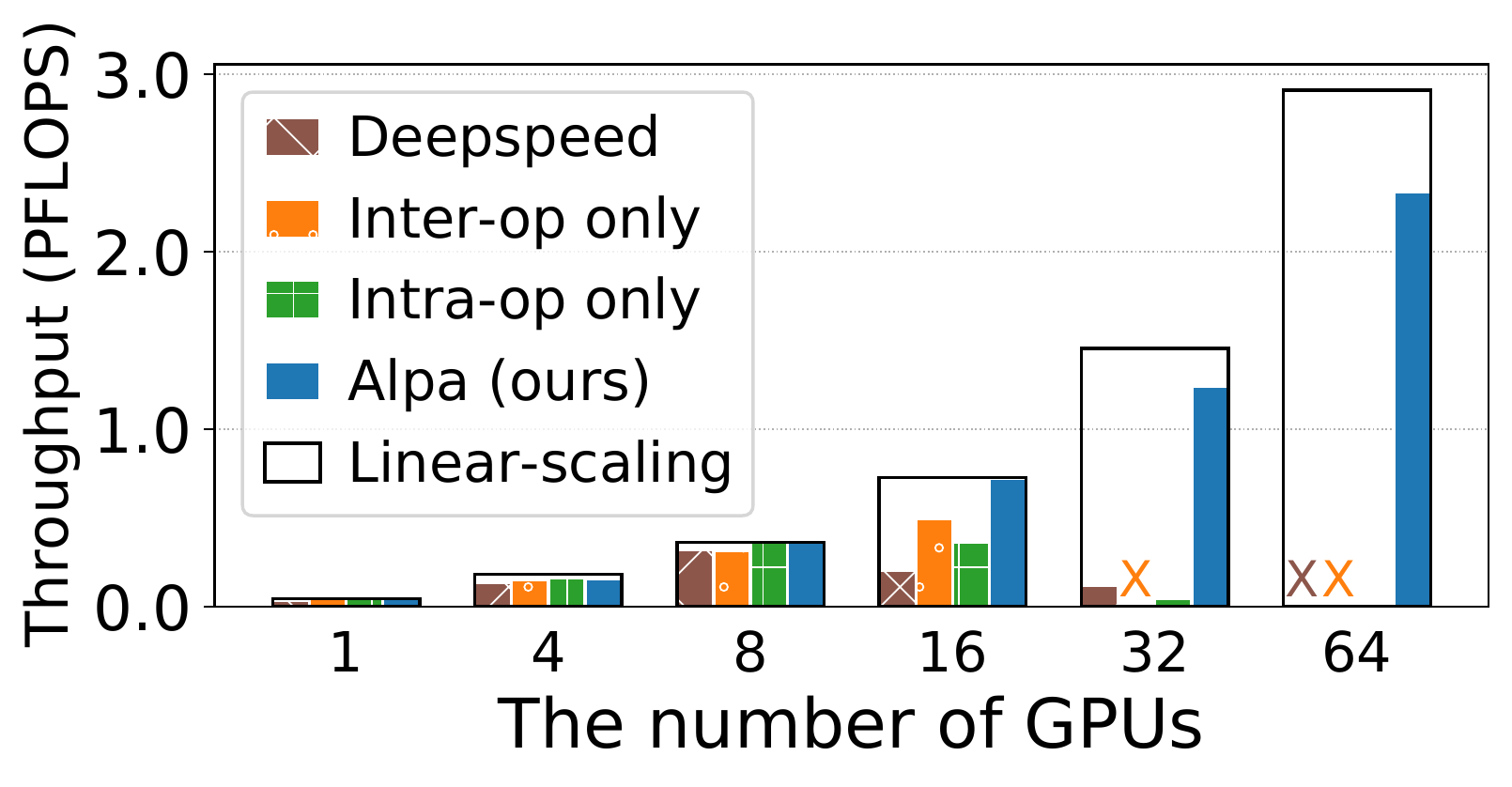}
		\vskip -0.6em
		\caption{MoE}
	\end{subfigure}
	\begin{subfigure}[b]{0.33\textwidth}
		\centering
		\includegraphics[width=\textwidth]{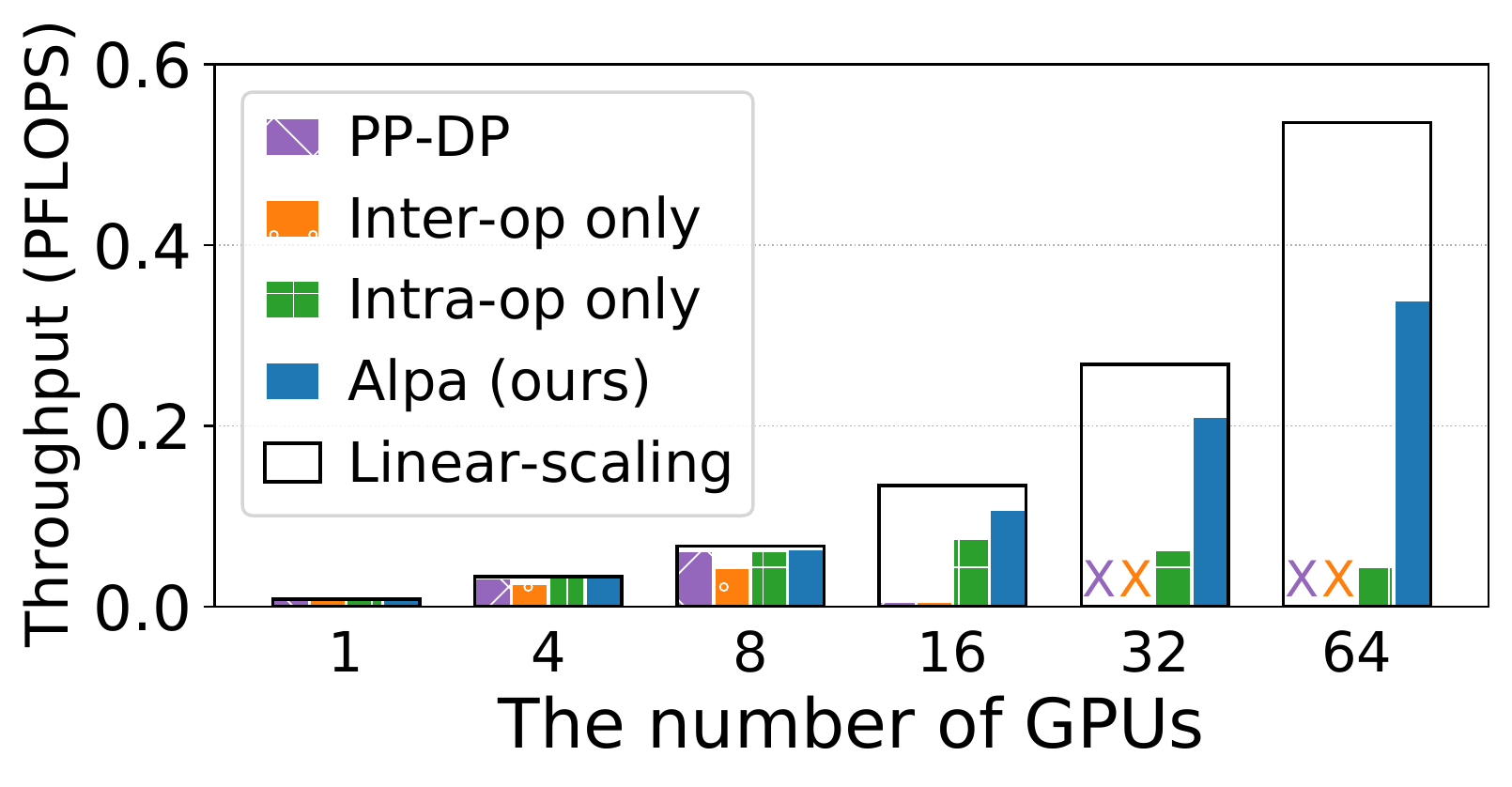}
		\vskip -0.6em
		\caption{Wide-ResNet}
	\end{subfigure}
	\vskip -0.8em
	\caption{End-to-end evaluation results. ``$\times$'' denotes out-of-memory. Black boxes represent linear scaling.}
	\vskip -1em
	\label{fig:evaluation:e2e}
\end{figure*}

\sys is implemented using about 16K LoC in Python and 6K LoC in C++.
\sys uses Jax as the frontend and XLA as the backend. The compiler passes are implemented on Jax's and XLA's intermediate representation (i.e., Jaxpr and HLO).
For the distributed runtime, we use Ray \cite{moritz2018ray} actor to implement the device mesh worker, XLA runtime for executing computation, and NCCL \cite{nccl} for communication.

We evaluate \sys on training large-scale models with billions of parameters, including GPT-3~\cite{brown2020language}, GShard Mixture-of-Experts (MoE)~\cite{lepikhin2020gshard}, and Wide-ResNet~\cite{zagoruyko2016wide}.
The testbed is a typical cluster consisting of 8 nodes and 64 GPUs.
Each node is an Amazon EC2 p3.16xlarge instance with 8 NVIDIA V100 16 GB GPUs, 64 vCPUs, and 488 GB memory.
The 8 GPUs in a node are connected via NVLink.
The 8 nodes are launched within one placement group with 25Gbps cross-node bandwidth.

We compare \sys against two state-of-the-art distributed systems for training large-scale models on GPUs.
We then isolate different compilation passes and perform ablation studies of our optimization algorithms.
We also include a case study of the execution plans found by \sys.

\subsection{End-to-End Performance}
\label{subsec:e2e}

\begin{table}[t]
\centering
\footnotesize
\vskip -0.5em
\caption{Models used in the end-to-end evaluation. LM = language model. IC = image classification.}
\vskip -1em

\scalebox{0.85}{
\begin{tabular}{ccccc}
\toprule
Model & Task & Batch size & \#params (billion) & Precision  \\
\midrule
GPT-3 \cite{brown2020language} & LM &  1024 & 0.35, 1.3, 2.6, 6.7, 15, 39 & FP16\\
GShard MoE \cite{lepikhin2020gshard} & LM & 1024 & 0.38, 1.3, 2.4, 10, 27, 70 & FP16\\
Wide-ResNet \cite{zagoruyko2016wide} & IC &  1536 & 0.25, 1.0, 2.0, 4.0, 6.7, 13 & FP32\\
\bottomrule
\end{tabular}
}
\vskip -1.5em
\label{table:models}
\end{table}

\noindent \textbf{Models and training workloads.} We target three types of models listed in Table~\ref{table:models}, covering models with both homogeneous and heterogeneous architectures. GPT-3 is a homogeneous transformer-based LM by stacking many transformer layers whose model parallelization plan has been extensively studied~\cite{shoeybi2019megatron, narayanan2021efficient}.
GShard MoE is a mixed dense and sparse LM, where mixture-of-experts layers are used to replace the MLP at the end of a transformer, every two layers.
Wide-ResNet is a variant of ResNet with larger channel sizes. It is vastly different from the transformer models and there are no existing manually designed strategies.

To study the ability to train large models, we follow common ML practice to scale the model size along with the number of GPUs, with the parameter range reported in Table~\ref{table:models}.
More precisely, for GPT-3, we increase the hidden size and the number of layers together with the number of GPUs following~\cite{narayanan2021efficient}, whereas for MoE we mainly increase the number of experts suggested by~\cite{lepikhin2020gshard,xu2021gspmd}.  For Wide-ResNet, we increase the channel size and width factor in convolution layers.
For each model, we adopt the suggested global batch size per ML practice~\cite{narayanan2021efficient,brown2020language,lepikhin2020gshard,zagoruyko2016wide} to keep the same statistical behavior.
We then tune the best microbatch size for each model and system configuration that maximizes the system performance. The gradients are accumulated across microbatches.
The detailed model specifications are provided in Appendix~\ref{sec:model_spec}.

\noindent \textbf{Baselines.} 
For each model, we compare \sys against a strong baseline.
We use Megatron-LM v2~\cite{narayanan2021efficient} as the baseline system for GPT-3.
Megatron-LM is the state-of-the-art system for training homogeneous transformer-based LMs on GPUs.
It combines data parallelism, pipeline parallelism, and manually-designed operator parallelism (denoted as TMP later).
The combination of these techniques is controlled by three integer parameters that specify the parallelism degrees assigned to each technique.
We grid-search the three parameters following the guidance of their paper and report the results of the best configuration. Megatron-LM is specialized for GPT-like models, so it does not support other models in Table~\ref{table:models}.

We use DeepSpeed \cite{rasley2020deepspeed} as the baseline for MoE. DeepSpeed provides a state-of-the-art implementation for training MoE on GPUs.
It combines handcrafted operator parallelism for MoE layers and ZeRO-based~\cite{rajbhandari2020zero} data parallelism. The combination of these techniques is controlled by several integer parameters that specify the parallelism degree assigned to each technique. We also grid-search them and report the best results.
The performance of DeepSpeed on \mbox{GPT-3} is similar to or worse than Megatron-LM, so we skip it on GPT-3. Note that original GShard-MoE~\cite{lepikhin2020gshard} implementation is only available on TPUs, thus we do not include its results, though their strategies~\cite{lepikhin2020gshard} are covered by \sys’s strategy space.

For large Wide-ResNet, there is no specialized system or manually designed plan for it. We use \sys to build a baseline ``PP-DP'' whose space only consists of data parallelism and pipeline parallelism, which mimics the parallelism space of PipeDream \cite{narayanan2019pipedream} and Dapple \cite{fan2021dapple}.

For all models, we also include the results of using \sys with only one of intra- and inter-operator parallelism, which mimics the performance of some other auto-parallel systems. The open-source Flexflow~\cite{jia2018beyond} does not support the models we evaluate, as it lacks support for many necessary operators (e.g., layer normalization~\cite{ba2016layer}, mixed-precision operators). Tofu~\cite{wang2019supporting} only supports single node execution and is not open-sourced. Due to both theoretical and practical limitations, we do not include their results and we do not expect Flexflow or Tofu to outperform the state-of-the-art manual baselines in our evaluation.

\noindent \textbf{Evaluation metrics.}
\sys does not modify the semantics of the synchronous gradient descent algorithm, thus does not affect the model convergence. Therefore, we measure training throughput in our evaluation. We evaluate weak scaling of the system when increasing the model size along with the number of GPUs.
Following~\cite{narayanan2021efficient}, we use the aggregated peta floating-point operations per second (PFLOPS) of the whole cluster as the metric\footnote{As the models are different for different numbers of GPUs, we cannot measure scaling on the system throughput such as tokens per second or images per second.}. We measure it by running a few batches with dummy data after proper warmup. All our results (including those in later sections) have a standard deviation within $0.5\%$, so we skip the error bars in our figures.

\noindent \textbf{GPT-3 results.}
\label{sec:GPT-result}
The parallelization plan for GPT-3 has been extensively studied~\cite{brown2020language,narayanan2021efficient,li2021terapipe}. 
We observe in Fig.~\ref{fig:evaluation:e2e}a that this manual plan with the best grid-searched parameters enables Megatron-LM to achieve super-linear weak scaling on GPT-3. Nevertheless, compared to Megatron-LM, \sys automatically generates execution plans and even achieves slightly better scaling on several settings. 
If compared to methods that only use intra-operator parallelism, our results are consistent with recent studies -- ``Intra-op only'' performs poorly on >16 GPUs because even the best plan has to communicate tensors heavily on cross-node connections, making communication a bottleneck. Surprisingly, ``Inter-op only'' performs well and maintains linear scaling on up to 64 GPUs. 

We investigate the grid-searched parameters of the manual plan on Megatron-LM, and compare it to the plan generated by \sys. It reveals two major findings. 
First, in Megatron-LM, the best manual plan has TMP as 1, except in rare settings, such as fitting the 39B model on 64 GPUs, where pipeline parallelism alone is unable to fit the model (stage) in GPU memory; meanwhile, data parallelism is maximized whenever memory allows. 
In practice, gradient accumulation (GA) is turned on to achieve a desired global batch size (e.g., 1024 in our setting). GA amortizes the communication of data parallelism and reduces the bubbles of pipeline parallelism, but the communication of TMP  grows linearly with GA steps, which puts TMP disadvantaged.
Second, \sys-generated plan closely resembles the best-performed ones in Megatron-LM, featuring (1) evenly-sized stages, (2) partitioning along the batch dimension in stages when memory is not stressed, but along non-batch dimensions when memory is stressed. One key difference between our plan and the manual plan is that \sys also partitions the weight update operations when data parallelism exists, which contributes to the slight performance improvement over Megatron-LM. This attributes to the fact that \sys, as a generic compiler system, can compose a wide range of parallelism approaches, while Megatron-LM, for now, misses weight update sharding support.

\begin{figure*}[t]
	\centering
	\begin{subfigure}[b]{0.33\textwidth}
		\centering
		\includegraphics[width=\textwidth]{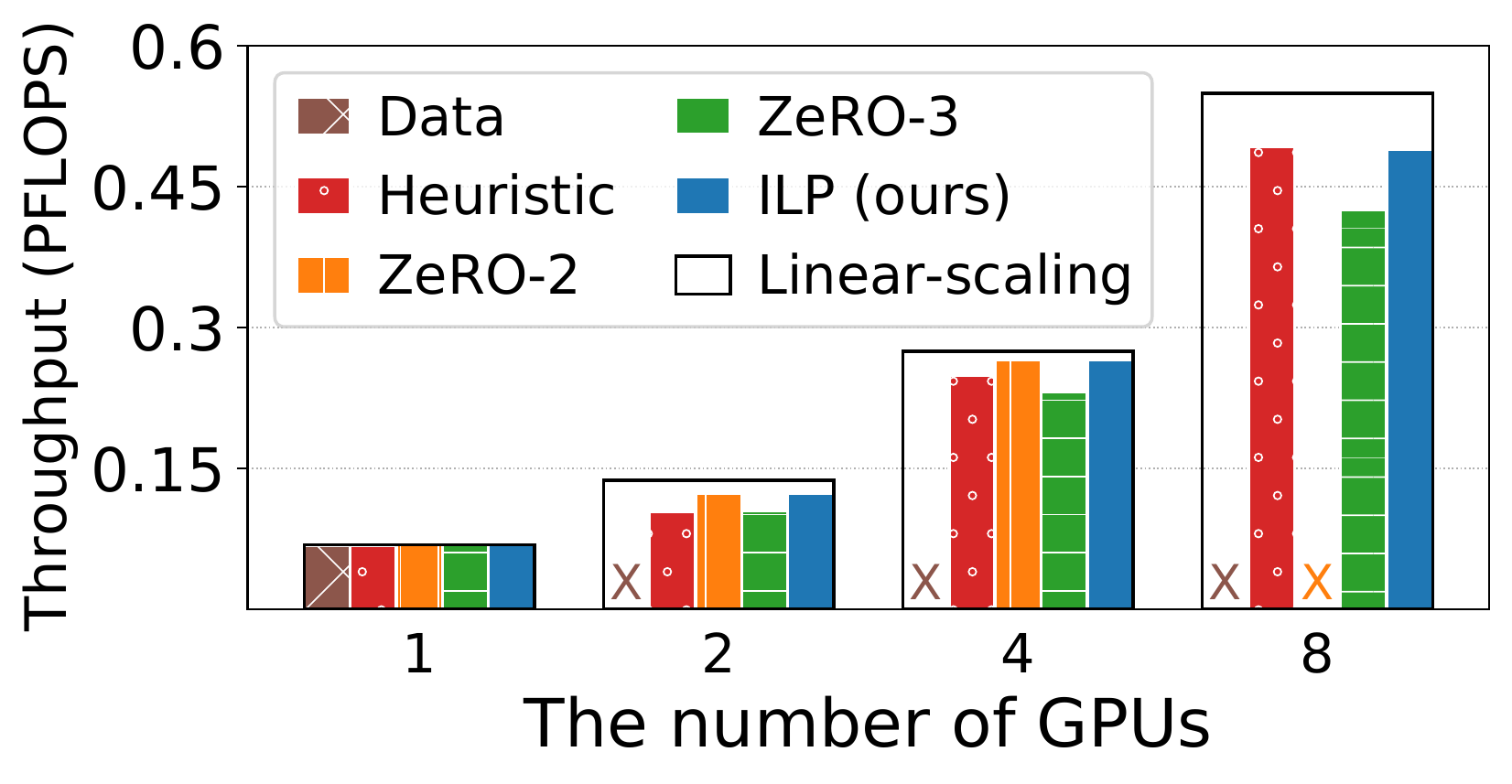}
		\vskip -0.6em
		\caption{GPT}
	\end{subfigure}
	\begin{subfigure}[b]{0.33\textwidth}
		\centering
		\includegraphics[width=\textwidth]{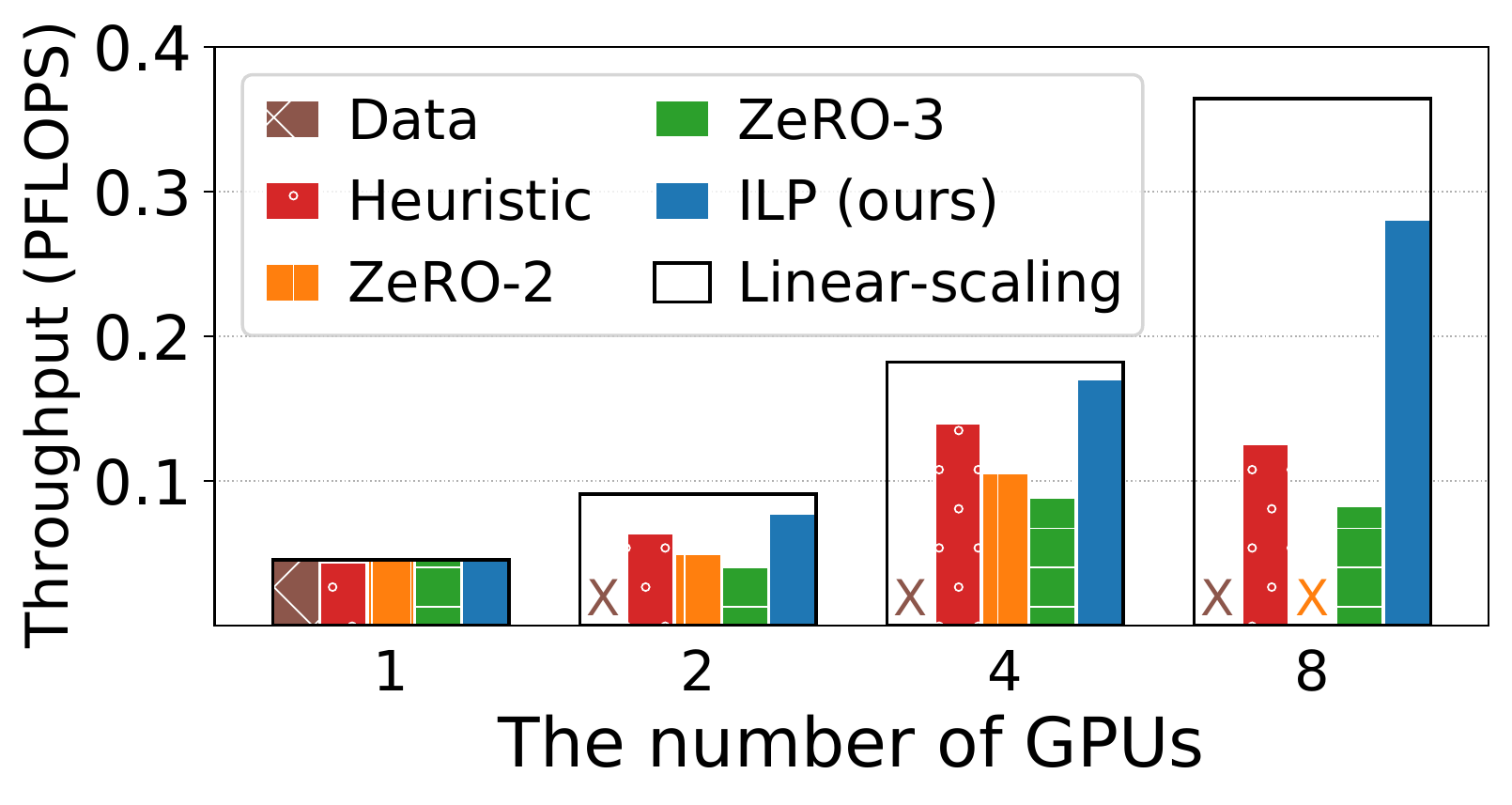}
		\vskip -0.6em
		\caption{MoE}
	\end{subfigure}
	\begin{subfigure}[b]{0.33\textwidth}
		\centering
		\includegraphics[width=\textwidth]{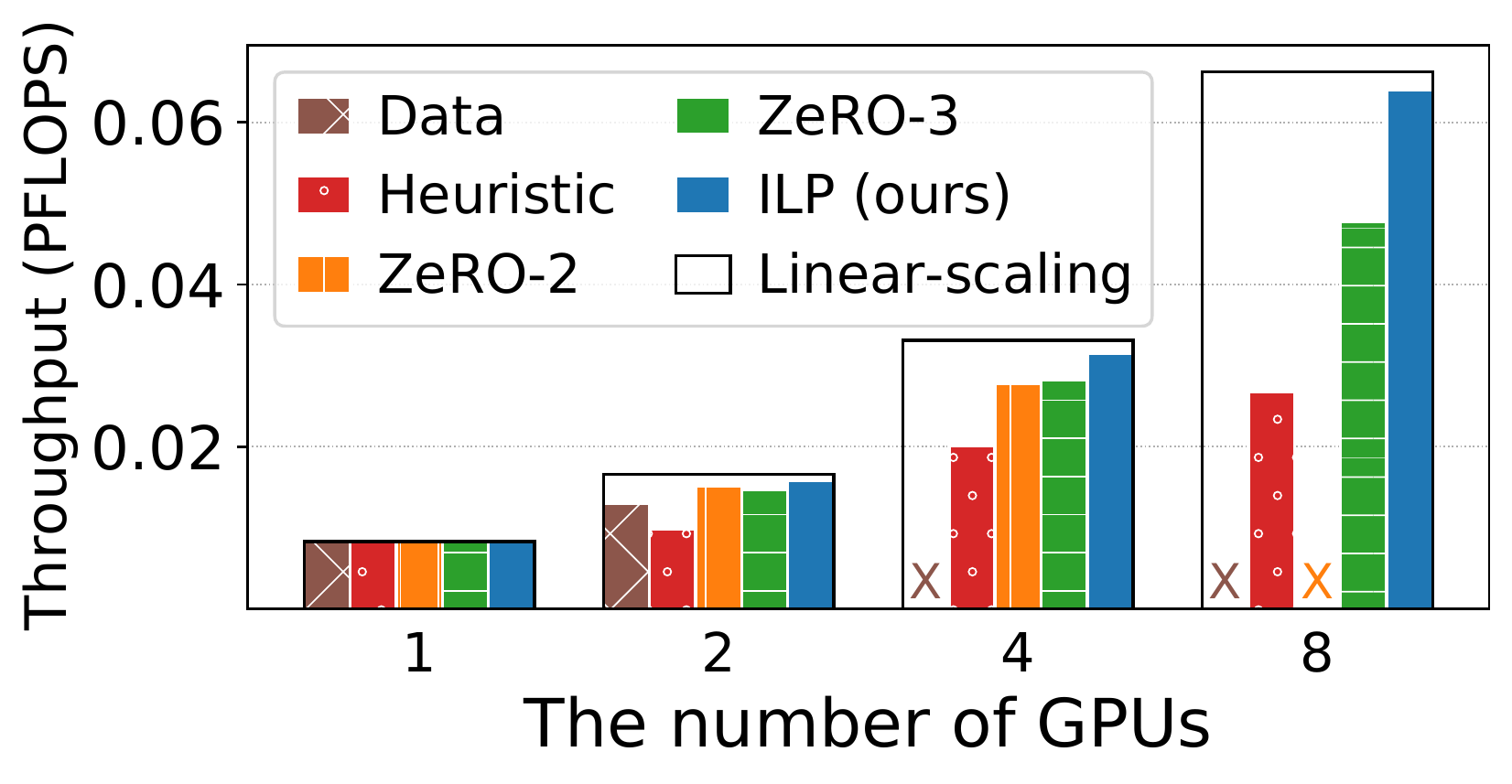}
		\vskip -0.6em
		\caption{Wide-ResNet}
	\end{subfigure}
	\vskip -1em
	\caption{Intra-operator parallelism ablation study. ``$\times$'' denotes out-of-memory. Black boxes represent linear scaling.}
	\vskip -1em
	\label{fig:evaluation:intra-op}
\end{figure*}

\noindent \textbf{MoE results.}
DeepSpeed adopts a manual operator parallelism plan for MoE models, developed by GShard~\cite{lepikhin2020gshard}, called \emph{expert parallelism}, which uses a simple rule: it partitions the expert axis for the operators in MoE layers, but switches back to data parallelism for non-expert layers.
This {expert parallelism} is then combined with ZeRO data parallelism and TMP.
All of these techniques belong to intra-operator parallelism.
Unfortunately, DeepSpeed's specialized implementation does not include any inter-operator parallelism approach, which is required for scaling across multiple nodes with low inter-node bandwidth.
Therefore, Deepspeed only maintains a good performance within a node ($\le 8$ GPUs) on this cluster.
``Intra-op only'' fails to scale across multiple nodes due to the same reason.
``Inter-op only'' runs out of memory on 32 GPUs and 64 GPUs because it is not easy to equally slice the model when the number of GPUs is larger than the number of layers of the model. The imbalanced slicing makes some memory-intensive stages run out of memory.

By contrast, \sys automatically discovers the best execution plans that combine intra- and inter-operator parallelism.
For intra-operator parallelism, \sys finds a strategy similar to expert parallelism and combines it with ZeRO data parallelism, thanks to its ILP-based intra-op pass.
\sys then constructs stages and uses inter-operator parallelism to favor small communication volume on slow connections.
\sys maintains linear scaling on 16 GPUs and scales well to 64 GPUs. Compared to DeepSpeed, \sys achieves $3.5\times$ speedup on 2 nodes and a $9.7\times$ speedup on 4 nodes.

\noindent \textbf{Wide-ResNet results.}
Unlike the previous two models that stack the same layer, Wide-ResNet has a more heterogeneous architecture. As the data batch is forwarded through layers, the size of the activation tensor shrinks while the size of the weight tensor inflates. This leads to an imbalanced distribution of memory usage and compute intensity across layers.
For this kind of model, it is difficult, if not impossible, to manually design a plan. However, \sys still achieves a scalable performance on 32 GPUs with 80\% scaling. The baselines ``PP-DP'' and ``Inter-op only'' run out of memory when training large models, because they cannot partition weights to reduce the memory usage, and it is difficult to construct memory-balanced stages for them. ``Intra-only''  requires a lot of communication on slow connections, so it cannot scale across multiple nodes. A case study on the generated plan for Wide-ResNet is in \S\ref{sec:eval:case-study}.

\subsection{Intra-Op Parallelism Ablation Study}
We study the effectiveness of our intra-operator parallelism optimization algorithm.
We compare our ILP-based solution against alternatives such as ZeRO optimizer and rule-based partitioning strategies.
 
 \noindent \textbf{Experimental setup.}
We run a weak scaling benchmark in terms of model size similar to \S\ref{subsec:e2e}, but disable pipeline parallelism and gradient accumulation to control variables. The benchmark is done on one AWS p3.16xlarge instance with 8 GPUs. In order to simulate an execution environment of large-scale training in one node, we use larger hidden sizes, smaller batch sizes, and smaller numbers of layers, compared to the model configurations in \S\ref{subsec:e2e}.

 \noindent \textbf{Baselines.}
 We compare automatic solutions for intra-operator parallelism.
``Data'' is vanilla data parallelism. 
``ZeRO-2''~\cite{rajbhandari2020zero} is a memory-efficient version of data parallelism which partitions gradients and optimizer states. 
``ZeRO-3''~\cite{rajbhandari2020zero} additionally partitions parameters on top of ``ZeRO-2''. 
``Heuristic'' uses a rule combined with the sharding propagation in GSPMD. It marks the largest dimension of every input tensor as partitioned and runs sharding propagation to get the sharding specs for all nodes in the graph.
``ILP'' is our solution based on the ILP solver.

\noindent \textbf{Results.}
As shown in Fig.~\ref{fig:evaluation:intra-op}, ``Data'' runs out of memory quickly and cannot train large models. ``ZeRO-2'' and ``ZeRO-3'' resolve the memory problem of data parallelism, but they do not optimize for communication as they always communicate the gradients. When the gradients are much larger than activations, their performance degenerates. ``Heuristic'' solves the memory issue by partitioning all tensors, but can be slowed down by larger communication. ``Auto-sharding'' performs best in all cases and maintains a near-linear scaling, because it figures out the correct partition plan that always minimizes the communication overhead.

\begin{figure}
	\vskip -1em

	\centering
	\begin{subfigure}[b]{\columnwidth}
	\centering
	\includegraphics[width=0.6\textwidth]{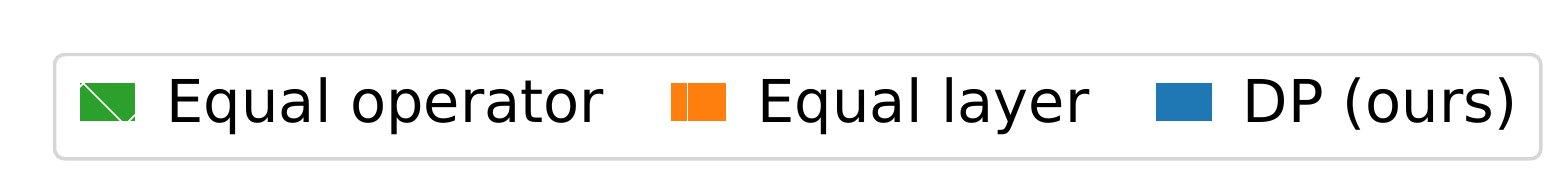}
    \end{subfigure}

	\begin{subfigure}[b]{0.30\columnwidth}
		\centering
		\includegraphics[width=\textwidth]{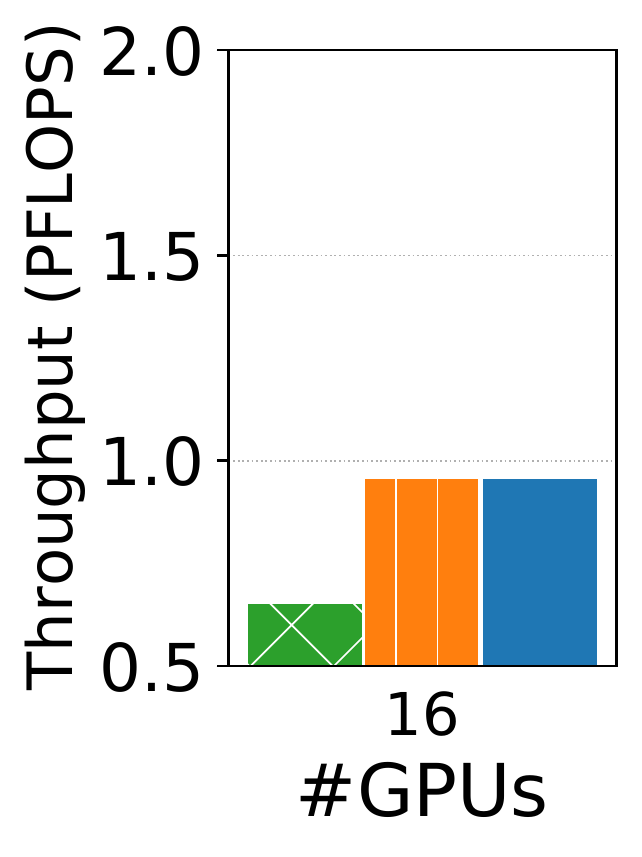}
		\vskip -0.6em
		\caption{GPT}
	\end{subfigure}
	\begin{subfigure}[b]{0.50\columnwidth}
		\centering
		\includegraphics[width=\textwidth]{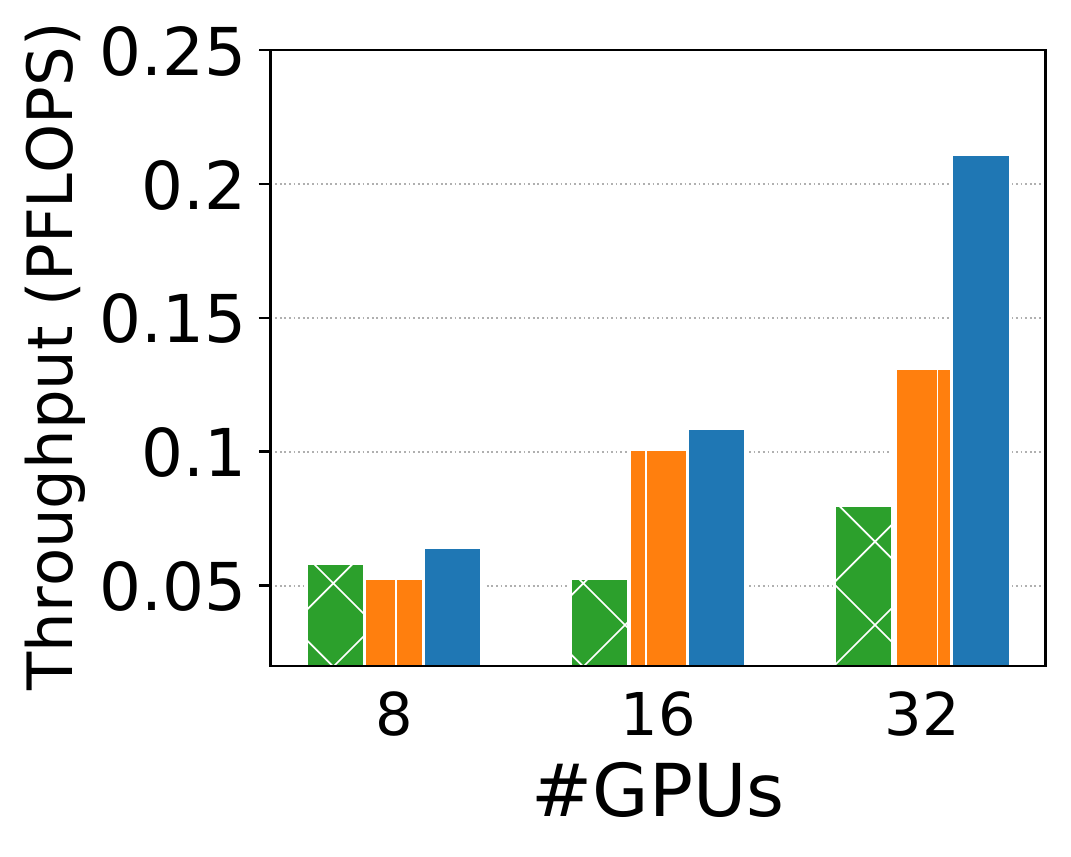}
		\vskip -0.6em
		\caption{Wide-ResNet}
	\end{subfigure}
	\vskip -1em
	\caption{Inter-operator parallelism ablation study.}
	\vskip -0.5em
	\label{fig:evaluation:inter-op}
\end{figure}

\subsection{Inter-Op Parallelism Ablation Study}
We study the effectiveness of our inter-operator parallelism optimization algorithm. We use ``DP'' to denote our algorithm.

\noindent \textbf{Experimental setup.}
We report the performance of three variants of our DP algorithm on GPT and Wide-ResNet. The benchmark settings are the same as the settings in \S\ref{subsec:e2e}.

\noindent \textbf{Baselines.}
We compare our DP algorithm with two rule-based algorithms.
``Equal operator'' disables our DP-based operator clustering but assigns the same number of operators to each cluster.
``Equal layer'' restricts our DP algorithm to use the same number of layers for all stages.

\noindent \textbf{Results.}
Fig.~\ref{fig:evaluation:inter-op} shows the result.
``DP'' always outperforms ``Equal operator''. This is because ``Equal operator'' merges operator that should be put onto different device meshes. \sys's algorithm can cluster operators based on the communication cost and computation balance.
Whether ``DP'' can outperform ``Equal layer'' depends on the model architecture.
On homogeneous models like GPT, the solution of our DP algorithm uses the same number of layers for all stages, so ``Equal layer'' performs the same as ``DP''.
On Wide-ResNet, the optimal solution can assign different layers to different stages, so ``Equal layer'' is worse than the full flexible DP algorithm.
For Wide-ResNet on 32 GPUs, our algorithm outperforms ``Equal operator'' and ``Equal layer'' by $2.6\times$ and $1.6\times$, respectively.

\begin{figure}[t]
	\centering
	\footnotesize
	\includegraphics[width=.80\columnwidth]{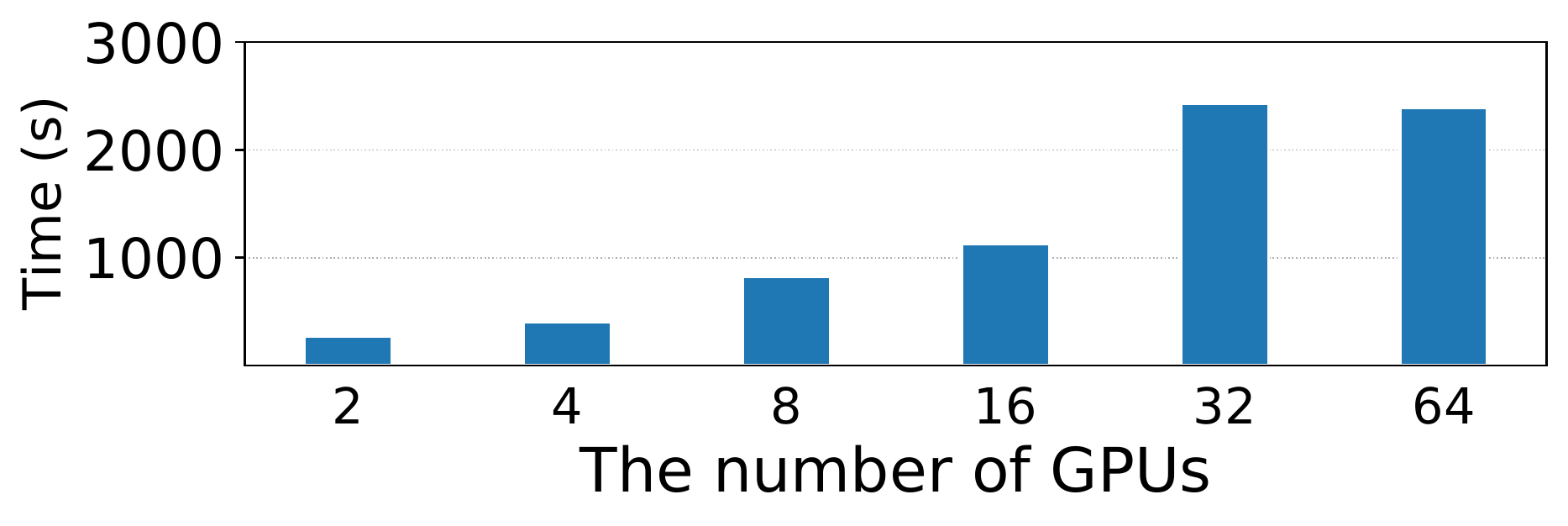}
	\vskip -0.5em
	\caption{\sys's compilation time on all GPT models. The model size and \#GPUs are simultaneously scaled.}
	\vskip -1.5em
	\label{fig:compilation-time-all}
\end{figure}

\subsection{Compilation Time}
Fig.~\ref{fig:compilation-time-all} shows \sys's compilation time for all the GPT settings in \S\ref{sec:GPT-result}. The compilation time is a single run of the full Alg.~\ref{alg:inter-op} with a provided number of microbatches $B$.
According to the result, \sys scales to large models or large clusters well, because compilation time grows linearly with the size of the model and the number of GPUs in the cluster. Table~\ref{table:compilation-time} reports the compilation time breakdown for the largest GPT model in our evaluation (39B, 64 GPUs).
Most of the time is spent on enumerating stage-mesh pairs and profiling them.
For the compilation part, we accelerate it by compiling different stages in parallel with distributed workers.
For profiling, we accelerate it using a simple cost model built at the XLA instruction level, which estimates the cost of matrix multiplication and communication primitives with a piece-wise linear model.
With these optimizations, the compilation and search for a model take at most several hours, which is acceptable as it is much shorter than the actual training time, which can take several weeks.

\subsection{Cross-Mesh Resharding}
\label{sec:eval:cross-mesh-resharding}
We evaluate our generalized local all-gather optimization for cross-mesh resharding between meshes with different shapes on Wide-ResNet, as shown in Fig.~\ref{fig:eval_cross_mesh_resharding}.
``signal send/recv'' is a synthetic case where we only send 1 signal byte between stages, which can be seen as the upper bound of the performance. ``w/o local all-gather'' disables our local all-gather optimization and uses only send/recv. ``w/ local all-gather'' enables our local all-gather optimization to move more communication from slow connections to fast local connections, which brings 2.0$\times$ speedup on 32 GPUs.  

\begin{table}[t]
	\centering
	\footnotesize
	\caption{Compilation time breakdown of GPT-39B.}
    \vskip -0.5em
    \scalebox{1.0}{
	\begin{tabular}{ccc}
		\toprule
		Steps                   & Ours       & w/o optimization \\
		\midrule
		Compilation             & 1582.66 s  & > 16hr  \\
		Profiling               & 804.48 s   & > 24hr \\
		Stage Construction DP   & 1.65 s     & N/A \\
		Other                   & 4.47 s     & N/A \\
		Total                   & 2393.26 s     & > 40hr \\
		\bottomrule
	\end{tabular}
	}
	\label{table:compilation-time}
\end{table}

\begin{figure}[t]
	\centering
	\footnotesize
	\includegraphics[width=.80\columnwidth]{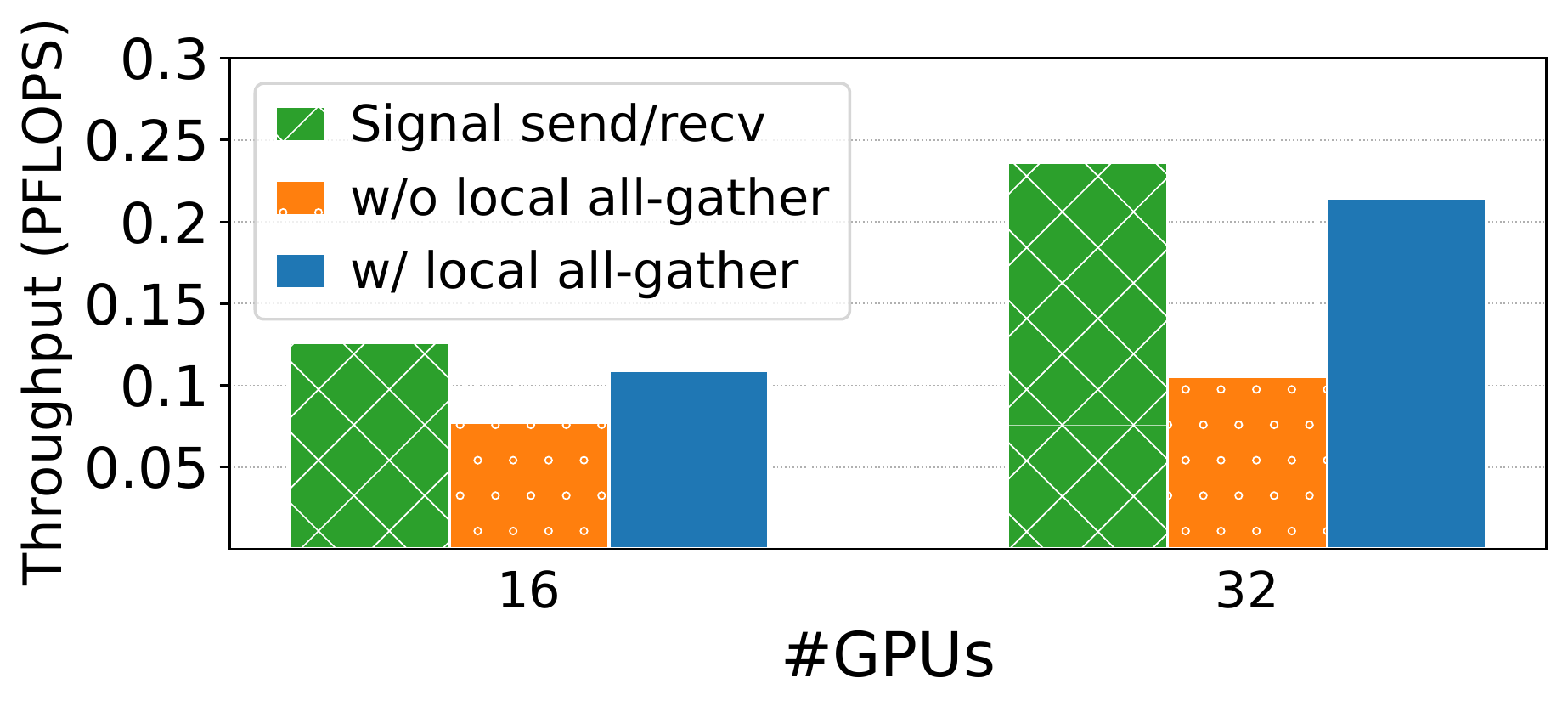}
	\vskip -1em
	\caption{Cross-mesh resharding on Wide-ResNet.}
	\vskip -1.5em
	\label{fig:eval_cross_mesh_resharding}
\end{figure}

\begin{figure*}
	\centering
    \includegraphics[width=.95\textwidth]{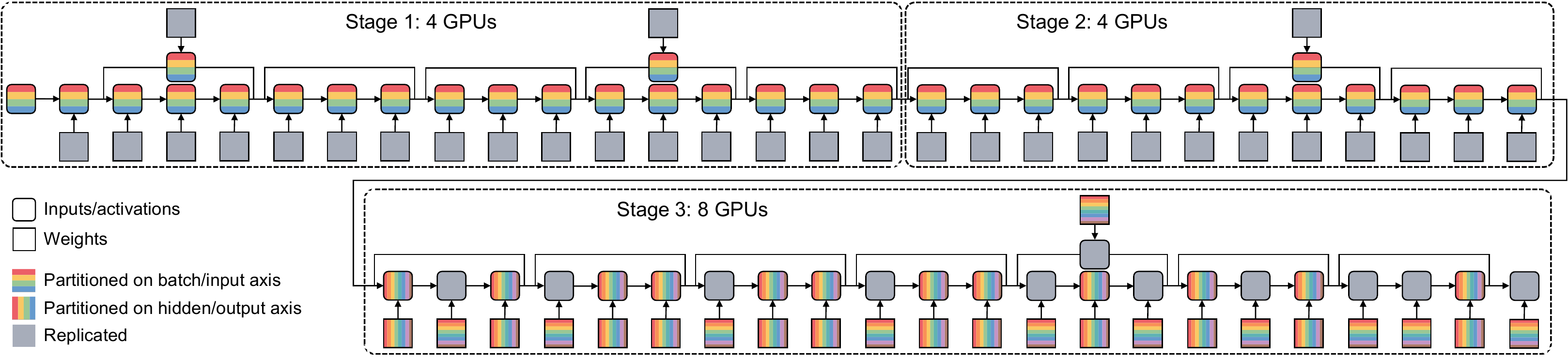}
	\vskip -0.5em
	\caption{Visualization of the parallel strategy of Wide-ResNet on 16 GPUs. Different colors represent the devices a tensor is distributed on. Grey blocks indicate a tensor is replicated across the devices. The input data and resulting activation of each convolution and dense layer can be partitioned along the batch axis and the hidden axis. The weights can be partitioned along the input and output channel axis.}
	\vskip -1em
	\label{fig:evaluation:case_wresnet}
\end{figure*}

\subsection{Case Study: Wide-ResNet}
\label{sec:eval:case-study}
We visualize the parallelization strategies \sys finds for Wide-ResNet on 16 GPUs in Fig.~\ref{fig:evaluation:case_wresnet}. We also include the visualization of results on 4 and 8 GPUs in Appendix~\ref{sec:extra_case_study}. 
On 4 GPUs, \sys uses only intra-operator parallelism. The intra-operator solution partitions along the batch axis for the first dozens of layers and then switches to partitioning the channel axis for the last few layers.
On 16 GPUs, \sys slices the model into 3 stages and assigns 4, 4, 8 GPUs to stage 1, 2, 3, respectively.
Data parallelism is preferred in the first two stages because the activation tensors are larger than weight tensors.
In the third stage, the ILP solver finds a non-trivial way of partitioning the convolution operators. The result shows that it can be opaque to manually create such a strategy for a heterogeneous model like Wide-ResNet, even for domain experts.

\section{Related Work}
\noindent \textbf{Systems for data-parallel training.} 
Horovod~\cite{sergeev2018horovod} and PyTorchDDP~\cite{li2020pytorchddp} are two commonly adopted data-parallel training systems that synchronize gradients using all-reduce.
BytePS~\cite{peng2019generic,jiang2020unified} unifies all-reduce and parameter servers and utilizes heterogeneous resources in data center clusters. 
AutoDist~\cite{zhang2020autosync} uses learning-based approaches to compose a data-parallel training strategy.
ZeRO~\cite{rajbhandari2020zero,xu2020automatic} improves the memory usage of data parallelism by reducing replicated tensors. 
MiCS~\cite{zhang2022mics} minimizes the communication scale on top of ZeRO for better scalability on the public cloud.
In \sys, data parallelism~\cite{kim2019parallax} reduces to a special case of intra-operator parallelism -- partitioned along the batch axis.

\noindent \textbf{Systems for model-parallel training.} 
The two major classes of model parallelisms have been discussed in \S\ref{sec:background}. Mesh-TensorFlow~\cite{shazeer2018mesh}, GSPMD~\cite{xu2021gspmd,lepikhin2020gshard} and OneFlow~\cite{yuan2021oneflow} provide annotation APIs for users to manualy specifiy the intra-op parallel plan.
ColocRL~\cite{mirhoseini2017device} puts disjoint model partitions on different devices \emph{without pipelining}, thereby the concurrency happens only when there exist parallel branches in the model.
In contrast, Gpipe~\cite{huang2019gpipe} splits the input data into micro-batches and forms pipeline parallelisms.
PipeDream~\cite{narayanan2019pipedream,narayanan2021memory} improves GPipe by using asynchronous training algorithms, reducing memory usage, and integrating it with data parallelism. However, PipeDream is asynchronous while \sys is a synchronous training system.
TeraPipe~\cite{li2021terapipe} discovers a new pipeline parallelism dimension for transformer-based LMs.
Google's Pathway system~\cite{barham2022pathways} is a concurrent work of \sys. Pathway advocates a single controller runtime architecture combining "single program multiple data" (SPMD) and "multiple program multiple data" (MPMD) model.
This is similar to \sys's runtime part, where SPMD is used for intra-op parallelisms and MPMD is used for inter-op parallelism.


\noindent \textbf{Automatic search for model-parallel plans.} Another line of work focuses on the automatic discovery of model-parallel training plans.
Tofu~\cite{wang2019supporting} develops a dynamic programming algorithm to generate the optimal intra-op strategy for \emph{linear} graphs on \emph{a single node}.
FlexFlow~\cite{jia2018beyond} proposes a ``SOAP'' formulation and develops an MCMC-based randomized search algorithm.
However, it only supports device placement without pipeline parallelism. Its search algorithm cannot scale to large graphs or clusters and does not have optimality guarantees. TensorOpt~\cite{cai2021tensoropt} develops a dynamic programming algorithm to automatically search for intra-op strategies that consider both memory and computation cost. Varuna~\cite{athlur2022varuna} targets low-bandwidth clusters and focuses on automating pipeline and data parallelism. Piper~\cite{tarnawski2021piper} also finds a parallel strategy with both inter- and intra-op parallelism, but it relies on manually designed intra-op parallelism strategies and analyzes on a uniform network topology and asynchronous pipeline parallel schedules.

\noindent \textbf{Techniques for training large-scale models.}
In addition to parallelization, there are other complementary techniques for training large-scale models, such as memory optimization~\cite{chen2016training, jain2019checkmate,chen2021actnn,huang2020swapadvisor,kirisame2020dynamic,ren2021zero}, communication compression \cite{bai2021gradient,vogels2019powersgd}, and low-precision training\cite{micikevicius2017mixed}.
\sys can incorporate many of these techniques.
For example, \sys uses rematerialization to reduce memory usage and uses mixed-precision training to accelerate computation.

\noindent \textbf{Compilers for deep learning.}
Compiler techniques have been introduced to optimize the execution of DL models~\cite{team2016theano,xla2017google, chen2018tvm,zheng2020ansor,jia2019taso,ma2020rammer,wang2021pet}.
Most of them focus on optimizing the computation for a single device.
In contrast, \sys is a compiler that supports a comprehensive space of execution plans for distributed training.

\noindent \textbf{Distributed tensor computation in other domains.}
Besides deep learning, libraries and compilers for distributed tensor computation have been developed for linear algebra \cite{blackford1997scalapack} and stencil computations \cite{denniston2016distributed}.
Unlike \sys, they do not consider necessary parallelization techniques for DL.
\section{Conclusion}
We present \sys, a new architecture for automated model-parallel distributed training, built on top of a new view of machine learning parallelization approaches: intra- and inter-operator parallelisms.
\sys constructs a hierarchical space and uses a set of compilation passes to derive efficient parallel execution plans at each parallelism level. \sys orchestrates the parallel execution on distributed compute devices on two different granularities.
Coming up with an efficient parallelization plan for distributed model-parallel deep learning is historically a labor-intensive task, and we believe \sys will democratize distributed model-parallel learning and accelerate the adoption of emerging large deep learning models.

\section{Acknowledgement}
\label{sec:acknowledgement}
We would like to thank Shibo Wang, Yu Emma Wang, Jinliang Wei, Zhen Zhang, Siyuan Zhuang, anonymous reviewers, and our shepherd, Ken Birman, for their insightful feedback.
In addition to NSF CISE Expeditions Award CCF-1730628, this research is supported by gifts from Alibaba Group, Amazon Web Services, Ant Group, CapitalOne, Ericsson, Facebook, Futurewei, Google, Intel, Microsoft, Nvidia, Scotiabank, Splunk, and VMware.

\bibliographystyle{plain}
\bibliography{ref}

\clearpage
\appendix
\newpage

\section{Proof of Submesh Shape Covering}
\label{sec:proof_submesh_shape}

We prove the following theorem which shows we can always find a solution that fully covers the cluster mesh $(N, M)$ with our selected submesh shapes in \S5.2:
(1) one-dimensional submeshes of shape $(1, 1), (1, 2), (1, 4) \ldots (1, 2^m)$ where $2^m = M$ and
(2) two-dimensional submeshes of shape $(2, M), (3, M), \ldots, (N, M)$ .

\begin{theorem}
	For a list of submesh shapes $(n_1, m_1), \ldots (n_S, m_S),$ if $ \sum_i n_i\cdot m_i = N\cdot M$ and each $(n_i, m_i)$ satisfies either (1) $n_i = 1$ and $m_i = 2^{p_i}$ is a power of 2 or (2) $m_i = M,$ then we can always cover the full $(N, M)$ mesh where $M = 2^m$ with these submesh shapes.
\end{theorem}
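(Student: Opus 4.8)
The plan is to split the submesh shapes into the two allowed families and tile the cluster mesh as two stacked horizontal bands. Write $A = \{\, i : n_i = 1 \,\}$ and $B = \{\, i : n_i \ge 2 \,\}$. Since every shape satisfies clause (1) or clause (2) and clause (1) forces $n_i = 1$, each $i \in B$ must have $m_i = M$, while each $i \in A$ has $m_i = 2^{p_i}$ with $2^{p_i} \le M = 2^m$ (this bound is needed anyway for the shape to be a submesh of an $N \times M$ mesh). Let $H = \sum_{i \in B} n_i$. The area hypothesis $\sum_i n_i m_i = N M$ then reads $H M + \sum_{i \in A} 2^{p_i} = N M$, so $H \le N$ and $\sum_{i \in A} 2^{p_i} = (N - H) M$.

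First I would tile the top band, rows $1, \ldots, H$: stack the shapes of $B$ in any order. Each has width exactly $M$, so together they cover the $H \times M$ block exactly, with no leftover space. It then remains to tile the bottom band, rows $H+1, \ldots, N$, an $(N - H) \times M$ region, using the $1 \times 2^{p_i}$ strips in $A$, whose lengths are powers of two bounded by $M$ and whose total length is $(N - H) M$.

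The core of the argument is a greedy row-packing claim: sort the strips of $A$ by nonincreasing length and fill the bottom band one row at a time, always placing the longest unplaced strip that still fits in the current row; then every row closes with length exactly $M$. The invariant is that, just before a strip of length $2^q$ is considered, the remaining free length $c$ of the current row is either $0$ or a positive multiple of $2^q$: it equals $M = 2^m$ when the row is opened, every strip already placed in this row had length $\ge 2^q$ and hence is a multiple of $2^q$, and subtracting multiples of $2^q$ preserves divisibility by $2^q$. Consequently $c$ never lies strictly between $0$ and $2^q$, so a strip of length $2^q$ fits whenever the row is not already full, and the row closes exactly at $M$. Since the strips' total length is $(N - H) M$ and each completed row accounts for $M$, exactly $N - H$ rows are filled, which is precisely the bottom band. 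Stacking the two bands tiles the entire $(N, M)$ mesh, proving the theorem; the degenerate cases $H = 0$ and $H = N$ (where one band is empty) are covered by the same reasoning.

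\textbf{Main obstacle.} I expect the only real content to be the greedy row-packing claim — in particular, isolating the invariant ``the free length of the current row is always $0$ or a multiple of the next strip's length'' and verifying it survives both placing a strip and opening a fresh row (where the free length resets to $M = 2^m$, a multiple of every admissible strip length). Everything else is bookkeeping with the area identity and the trivial observation that width-$M$ blocks stack.
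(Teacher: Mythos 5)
Your proof is correct, and while it opens the same way as the paper's (place all full-width shapes with $m_i = M$ first, so that the problem reduces to packing width-$2^{p_i}$ strips into the remaining $(N-H)\times M$ band), the core packing lemma is argued by a genuinely different route. The paper proceeds by induction on $m$: it observes via a parity argument on $\sum_i 2^{p_i} = (N-H)\cdot 2^m$ that the number of $(1,1)$ strips must be even, pairs them into $(1,2)$ blocks, and then divides all lengths by two to reduce to the case $m-1$. You instead give a direct greedy construction: sort the strips by nonincreasing length and fill rows left to right, with the invariant that the free length of the current row is always a multiple of the next strip's length (since the row opens at $2^m$ and every strip already placed is a longer power of two); hence no strip ever fails to fit and every row closes at exactly $M$, with the area identity forcing exactly $N-H$ rows. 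Your argument buys an explicit placement algorithm in one pass and avoids the induction/parity bookkeeping, while the paper's induction is slightly shorter to state and isolates the only non-trivial arithmetic fact (evenness of the count of unit strips) cleanly. One small point you handle correctly but should keep explicit: the hypothesis $2^{p_i}\le M$ is not literally in the theorem statement and must be read off from the context (the admissible one-dimensional shapes stop at $(1,2^m)=(1,M)$); without it the statement is false, and both your invariant (the fresh row length $M$ must be a multiple of every strip length) and the paper's restriction to $p_i\in\{0,\ldots,m-1\}$ rely on it.
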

\begin{proof}
	We start with putting the second type submesh into the full mesh. In this case, because $m_i = M,$ these submeshes can cover the full second dimension of the full mesh. After putting all the second kind of submeshes into the mesh, we reduce the problem to fit a cluster mesh of shape $(N, M)$ with submeshes with shape $(1, 2^{p_1}), \ldots, (1, 2^{p_S})$ where all $p_i \in \{0, 1, \ldots, m-1\}$. Note that now we have
	\begin{equation}
		2^{p_1} + \cdots + 2^{p_S} = N \cdot 2^{m}.
		\label{eq:small-mesh}
	\end{equation}
	We start an induction on $m.$ When $m = 1,$ we have all $p_i = 0$ and thus all the submeshes are of shape $(1, 1),$ which means that all the submeshes can definitely cover the full mesh. Assume the above hold for all $m = 1, 2, \ldots, k-1$. When $m=k$, note that in this case the number of submeshes with $p_i = 0$ should be an even number, because otherwise the left hand side of Eq.~\ref{eq:small-mesh} will be an odd number while the right hand side is always an even number. Then we can split all submeshes with shape $p_i=0$ into pairs, and we co-locate each pair to form a $(1, 2)$ mesh. After this transformation, we have all $p_i > 0,$ so we can subtract all $p_i$ and $m$ by 1 and reduce to $m = k-1$ case. Therefore, the theorem holds by induction.
\end{proof}

\section{Model Specifications}
\label{sec:model_spec}

For GPT-3 models, we use sequence length = 1024 and vocabulary size = 51200 for all models.
Other parameters of the models are listed in Table.~\ref{table:gpt_spec}. The last column is the number of GPUs used to train the corresponding model.

For GShard MoE models, we use sequence length = 1024 and vocabulary size = 32000 for all models.
Other parameters of the models are listed in Table.~\ref{table:moe_spec}. The last column is the number of GPUs used to train the corresponding model.

For Wide-ResNet models, we use input image size = (224, 224, 3) and \#class = 1024 for all models.
Other parameters of the models are listed in Table.~\ref{table:wresnet_spec}. The last column is the number of GPUs used to train the corresponding model.

\begin{table}
	\centering
	\caption{GPT-3 Model Specification}
	\begin{tabular}{ccccc}
		\toprule
		\#params & Hidden size & \#layers & \#heads  & \#gpus \\
		\midrule
		350M & 1024 & 24 & 16 & 1  \\
		1.3B & 2048 & 24 & 32 & 4  \\
		2.6B & 2560 & 32 & 32 & 8  \\
		6.7B & 4096 & 32 & 32 & 16 \\
		15B  & 5120 & 48 & 32 & 32 \\
		39B  & 8192 & 48 & 64 & 64 \\
		\bottomrule
	\end{tabular}
	\label{table:gpt_spec}
\end{table}

\begin{table}
	\centering
	\caption{GShard MoE Model Specification}
	\scalebox{0.88}{
		\begin{tabular}{cccccc}
			\toprule
			\#params & Hidden size & \#layers & \#heads & \#experts & \#gpus \\
			\midrule
			380M & 768  & 8  & 16 & 8  & 1  \\
			1.3B & 768  & 16 & 16 & 16 & 4  \\
			2.4B & 1024 & 16 & 16 & 16 & 8  \\
			10B  & 1536 & 16 & 16 & 32 & 16 \\
			27B  & 2048 & 16 & 32 & 48 & 32 \\
			70B  & 2048 & 32 & 32 & 64 & 64 \\
			\bottomrule
		\end{tabular}
	}
	\label{table:moe_spec}
\end{table}

\begin{table}
	\centering
	\caption{Wide-ResNet Model Specification}
	\scalebox{0.92}{
		\begin{tabular}{ccccc}
			\toprule
			\#params & \#layers & Base channel & Width factor & \#gpus \\
			\midrule
			250M & 50 & 160 & 2  & 1  \\
			1B   & 50 & 320 & 2  & 4  \\
			2B   & 50 & 448 & 2  & 8  \\
			4B   & 50 & 640 & 2  & 16 \\
			6.8B & 50 & 320 & 16 & 32 \\
			13B  & 101 & 320 & 16 & 64 \\
			\bottomrule
		\end{tabular}
	}
	\label{table:wresnet_spec}
\end{table}

\section{Extra Case Study}
\label{sec:extra_case_study}
We visualize the parallelization strategies \sys finds for Wide-ResNet on 4 and 8 GPUs in Fig.~\ref{fig:evaluation:case_wresnet_appendix}.

\begin{figure*}
	\centering
	\begin{subfigure}[b]{\textwidth}
	\centering
    \includegraphics[width=.95\textwidth]{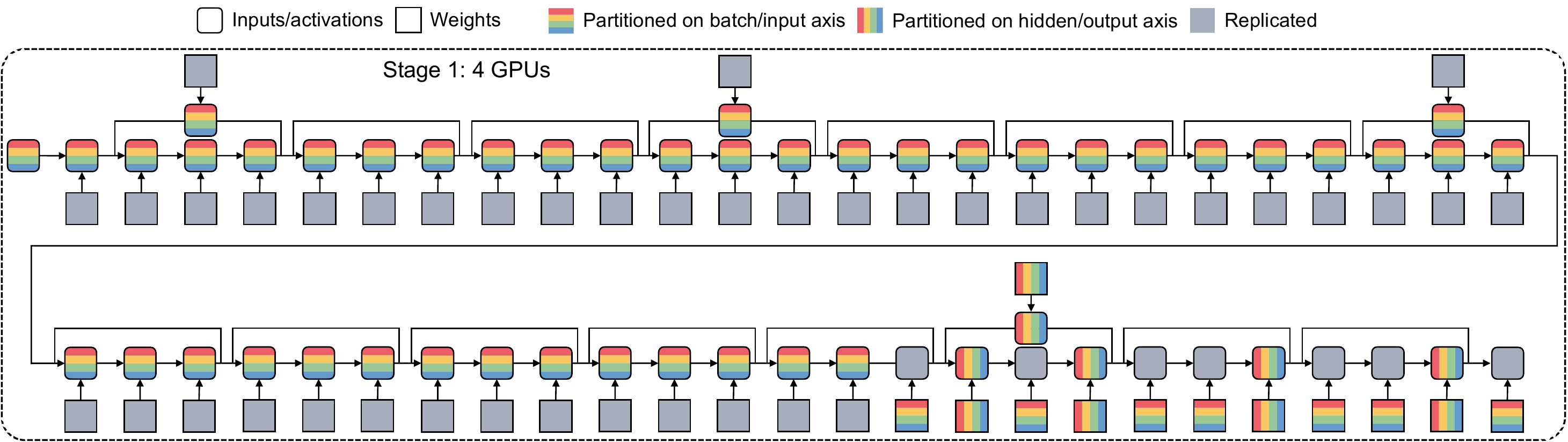}
    \caption{Parallel strategy of Wide-ResNet on 4 GPUs.}
    \end{subfigure}
    \begin{subfigure}[b]{\textwidth}
	\centering
    \vspace{2mm}
    \includegraphics[width=.95\textwidth]{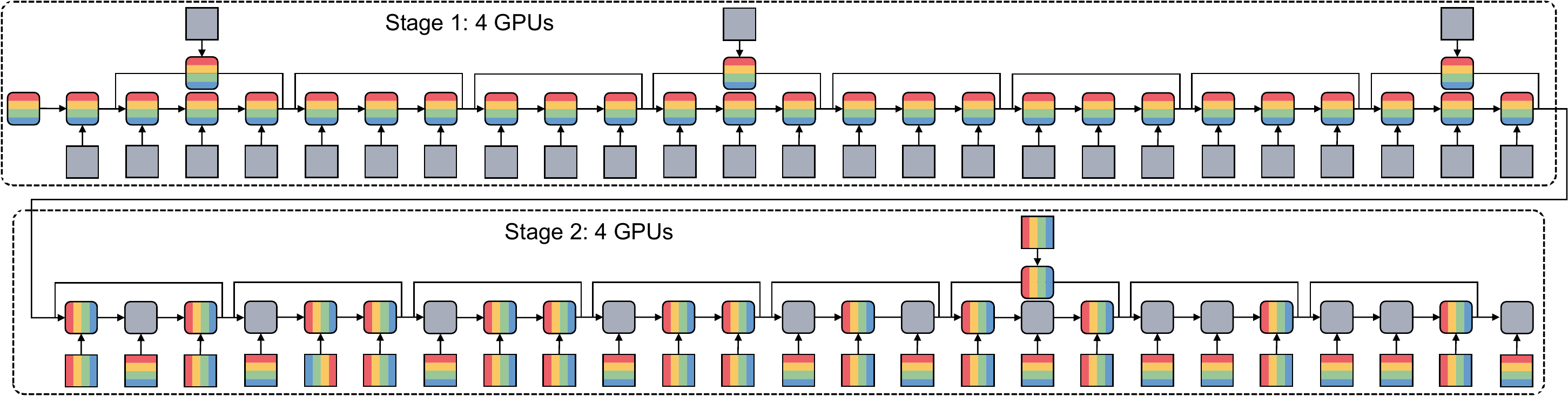}
    \caption{Parallel strategy of Wide-ResNet on 8 GPUs.}
    \end{subfigure}
	\caption{Visualization of the parallel strategy of Wide-ResNet on 4 and 8 GPUs. Different colors represent the devices a tensor is distributed on. Grey blocks indicate a tensor is replicated across all devices. The input data and resulting activation of each convolution or dense layer can be partitioned along the batch axis and the hidden axis. The weights can be partitioned along the input and output channel axis.}
	\label{fig:evaluation:case_wresnet_appendix}
\end{figure*}

\end{document}